\documentclass[11pt]{article}
\oddsidemargin  0in
\evensidemargin 0in
\topmargin -0.5in
\textwidth 6.5in
\textheight 9.0in
\oddsidemargin  0in
\evensidemargin 0in
\topmargin -0.5in
\textwidth 6.5in
\textheight 9.0in

\usepackage{amsfonts}
\usepackage{amssymb}
\usepackage{amstext}
\usepackage{amsmath}
\usepackage{natbib}
\bibpunct{[}{]}{,}{a}{,}{,}

\bibliographystyle{abbrv}
\usepackage{mathtime}
\usepackage{graphicx}
\usepackage{epsfig}

\usepackage{times}
\usepackage{amsmath, amssymb}
\usepackage{amsthm}

\title{Differential Privacy with Compression}

\author{
Shuheng Zhou\\
Seminar f\"{u}r Statistik\\
ETH Z\"{u}rich \\
CH-8092 Z\"{u}rich, Switzerland\\
\texttt{zhou@stat.math.ethz.ch} \\
\ \\
Katrina Ligett\\
Computer Science Department\\
Carnegie Mellon University\\
Pittsburgh, PA 15213 \\
\texttt{katrina@cs.cmu.edu} \\
\ \\
Larry Wasserman\\
Department of Statistics\\
Carnegie Mellon University\\
Pittsburgh, PA 15213 \\
\texttt{larry@stat.cmu.edu} \\
}
%

\newcommand{\event}{\mathcal{E}}
\newcommand{\eps}{\epsilon}

\newcommand{\R}{{\mathbb R}}

\newcommand{\X}{\mathcal{X}}\newcommand{\vecx}{\vec{x}}

\newcommand{\D}{\mathcal{D}}
\newcommand{\Sset}{\mathcal{S}}

\newcommand{\func}[1]{\ensuremath{\mathrm{#1}}}

\newcommand{\mvec}{\func{vec}}
\def\argmin{\mathop{\rm argmin}}

\def\ip#1#2{\langle #1, #2\rangle}
\newcommand{\abs}[1]{\left\lvert#1\right\rvert}
\newcommand{\prob}[1]{\ensuremath{\mathbb P}\left[#1\right]}
\newcommand{\Law}[1]{\ensuremath{\mathcal{L}}\left(#1\right)}
\newcommand{\probb}[2]{\ensuremath{\mathbb P}_{#1}\left[#2\right]}

\newcommand{\norm}[1]{\left\lVert#1\right\rVert}
\newcommand{\inv}[1]{\frac{1}{#1}}
\newcommand{\expf}[1]{\exp\left\{#1\right\}}
\newcommand{\fnorm}[1]{\lVert#1\rVert_F}
\newcommand{\be}{\begin{equation}}
\newcommand{\ee}{\end{equation}}
\newcommand{\bea}{\begin{eqnarray}}
\newcommand{\eea}{\end{eqnarray}}
\newcommand{\twonorm}[1]{\left\lVert#1\right\rVert_2}

\def\supp{\mathop{\text{supp}\kern.2ex}}
\def\argmin{\mathop{\text{arg\,min}\kern.2ex}}

\def\qed{\hskip1pt $\;\;\scriptstyle\Box$}
\newenvironment{proofof}[1]{\vskip2pt{\it Proof}{ of #1}.\hskip10pt}{\qed\vskip5pt}
\newtheorem{thm}{Theorem}[section]
\newtheorem{lmma}[thm]{Lemma}
\newtheorem{propn}[thm]{Proposition}
\newtheorem{define}[thm]{Definition}
\newtheorem{exm}[thm]{Example}
\newtheorem{remrk}[thm]{Remark}
\newtheorem{clm}[thm]{Claim}
\newtheorem{coroll}[thm]{Corollary}
\newtheorem{proc}[thm]{Procedure}
\newenvironment{proposition}{\begin{propn}\hskip-6pt{\bf }\enspace \sl}{\end{propn}}
\newenvironment{theorem}{\begin{thm}\hskip-6pt{\bf }\enspace \sl}{\end{thm}}
\newenvironment{lemma}{\begin{lmma}\hskip-6pt{\bf }\enspace \sl}{\end{lmma}}

\newenvironment{definition}{\begin{define}\hskip-6pt{\bf }\enspace \sl}{\end{define}}
\newenvironment{example}{\begin{exm}\hskip-6pt{\bf }\enspace \sl}{\end{exm}}
\newenvironment{corollary}{\begin{coroll}\hskip-6pt{\bf }\enspace \sl}{\end{coroll}}
\newenvironment{remark}{\begin{remrk}\hskip-6pt{\bf }\enspace \rm}{\end{remrk}}
\newenvironment{procedure}[1]{\begin{proc}\hskip-6pt{\bf }\enspace \rm}{\end{proc}}


\begin{document}

\maketitle

\begin{abstract}
\noindent\normalsize
This work studies formal utility and privacy guarantees for a simple
multiplicative database transformation, where the data are compressed by a
random linear or affine transformation, reducing the number of data records
substantially, while preserving the number of original input variables.
We provide an analysis framework inspired by a recent concept known as
\emph{differential privacy}~\cite{Dwork:06}. 
Our goal is to show that, despite the general difficulty
of achieving the differential privacy guarantee, it is possible to
publish synthetic data that are useful for a number of common 
statistical learning applications.
This includes high dimensional sparse regression~\cite{ZLW07}, 
principal component analysis (PCA), and other statistical 
measures~\cite{Liu:06} based on the covariance of the initial data.
\end{abstract}

\section{Introduction}
\label{sec:introduction}
In statistical learning, privacy is increasingly a concern whenever
large amounts of confidential data are manipulated within or published
outside an organization.  It is often important to allow researchers
to analyze data \emph{utility} without leaking information or 
compromising the \emph{privacy} of individual records. In this work, 
we demonstrate that one can preserve utility for a variety of 
statistical applications while achieving a formal definition of privacy.  
The algorithm we study is a simple random projection by a matrix of 
independent Gaussian random variables that compresses the number of 
records in the database. Our goal is to
preserve the privacy of every individual in the database, even if the
number of records in the database is very large.  
In particular, we
show how this randomized procedure can achieve a form of ``differential
privacy''~\cite{Dwork:06}, while at the same
time showing that the compressed data can be used for Principal
Component Analysis (PCA) and other operations that rely on the
accuracy of the empirical covariance matrix computed via the compressed data,
compared to its population or the uncompressed correspondents. Toward
this goal, we also study ``distributional privacy'' 
which is more natural for many statistical inference tasks.

More specifically, the data are represented as a $n\times p$ matrix $X$.
Each of the $p$ columns is an attribute, and each of the $n$ rows is
the vector of attributes for an individual record.  The data are
compressed by a random linear transformation $X \mapsto \X \;\equiv\;
\Phi X$, where $\Phi$ is a random $m\times n$ matrix with $m \ll
n$. It is also natural to consider a random affine transformation $X
\mapsto \X \;\equiv\; \Phi X + \Delta$, where $\Delta$ is a random
$m\times p$ matrix, as considered in~\cite{ZLW07} for privacy
analysis, the latter of which is beyond the scope of this paper and
intended as future work.  Such transformations have been called
``matrix masking'' in the privacy literature~\citep{duncan:91}.  The
entries of $\Phi$ are taken to be independent Gaussian random
variables, but other distributions are possible.  The resulting compressed
data can then be made available for statistical analyses; that is, we
think of $\X$ as ``public,'' while $\Phi$ and $\Delta$ are private and
only needed at the time of compression.  However, even if $\Phi$ were
revealed, recovering $X$ from $\X$ requires solving a highly
underdetermined linear system and comes with information theoretic
privacy guarantees, as demonstrated in~\cite{ZLW07}. 



Informally, differential privacy \cite{Dwork:06}~limits the increase
in the information that can be learned when any single entry is
changed in the database.  This limit implies \cite{MT07} that allowing
one's data to be included in the database is in some sense
incentive-compatible. Differential privacy imposes a compelling and
clear requirement, that when running a privacy-preserving algorithm on
two neighboring databases that differ in only one entry, the
probability of any possible outcome of the algorithm should be nearly
(multiplicatively) equal.  Many existing results in differential
privacy use additive output perturbations by adding a small amount of
random noise to the released information according to the sensitivity
of the {\em query} function $f$ on data $X$.  
In this work, we focus on a class $\mathcal{F}$ of Lipschitz functions that are
bounded, up to a constant $L$, by the differences between two covariance
matrices, (for example, for $\Sigma = \frac{X^T X}{n}$ and its 
compressed realization $\Sigma' = \frac{X^T \Phi^T \Phi X}{m}$ given $\Phi$),
\begin{equation}
\label{eq::lip-func}
{\cal F}(L)  = \Biggl\{ f:\ |f(A) - f(D)| 
\leq L \norm{A - D}  \Biggr\},
\end{equation}
where $A, D$ are positive definite matrices and $\norm{\cdot}$ is understood 
to be any matrix norm (for example, PCA depends on $\fnorm{\Sigma - \Sigma'}$).
Hence we focus on releasing a  
multiplicative form of perturbation of the input data, such that for a 
particular type of functions as in~\eqref{eq::lip-func}, 
we achieve both utility and privacy.
Due to the space limits, we only explore PCA in this paper.

We emphasize that although one could potentially release a version of the 
covariance matrix to preserve data privacy while performing PCA and functions 
as in~\eqref{eq::lip-func}, releasing the compressed data $\Phi X$ is 
more informative than releasing the perturbed covariance matrix 
(or other summaries) alone.
For example, Zhou et al.~\cite{ZLW07} demonstrated the
utility of this random linear transformation by analyzing the asymptotic
properties of a statistical estimator under random projection in the
high dimensional setting for $n \ll p$.
They showed that the relevant linear predictors can be learned from the 
compressed data almost as well as they could be from the original 
uncompressed data. Moreover, the actual predictions based on new 
examples are almost as accurate as they would be had the original data 
been made available.  
Finally, it is possible to release the compressed data plus some other 
features of the data to yield more information, although this is beyond the 
scope of the current paper. 
We note that in order to guarantee differential privacy,
$p < n$ is required.

In the context of guarding privacy over a set of databases 
$\Sset_n = \{X_1, X_2, \ldots\}$, where 
$\Sigma_j = X_j^T X_j/n,  \forall X_j$.
we introduce an additional parameter in our privacy definition,
$\Delta_{\max}(\Sset_n)$, which is an upper bound on pairwise distances 
between any two databases $X_1, X_2 \in \Sset_n$ (differing in any number of
rows), according to a certain distance measure.
In some sense, this parametrized approach of 
tuning the magnitude of the distance measure $\Delta_{\max}(\Sset_n)$ 
is the key idea we elaborate in Section~\ref{sec:model}.

Toward these goals, we develop key ideas in Section~\ref{sec:diff}, 
that include measure space truncation and renormalization for each measure 
$P_{\Sigma_j}, \forall j$ with Law $\Law{\cdot|X_j} \sim N(0, \Sigma_j)$;
these ideas are essential in order to guarantee differential privacy, 
which requires that even for very rare events, 
$\abs{\ln{P_{\Sigma_i}(\event)/P_{\Sigma_j}(\event)}}$ remains small 
$\forall i, j$.
We show that such rare events, when they happen not to be useful for the 
utilities that we explore, can be cut out entirely from the output space by 
simply discarding such outputs and regenerating a new $\X$. In this way, we 
provide a differential privacy guarantee by avoiding the comparisons 
made on these rare events. 
We conjecture that this is a common phenomenon rather than being specific 
to our analysis alone. In some sense, this observation is the
inspiration for our
\emph{distributional} privacy definition: over a large number $n$ of 
elements drawn from $\D$, the entire ocean of elements, 
the tail events are even more rare by the Law of Large Numbers, 
and hence we can safely truncate events whose measure $\prob{\event}$ 
decreases as $n$ increases. 

Related work is summarized in Section~\ref{section:related}.
Section~\ref{sec:definitions} formalizes privacy definitions.
Section~\ref{sec:model} gives more detail of our probability model
and summarizes our results on privacy and PCA (with proof in 
Section~\ref{sec:PCA}).
All technical proofs appear in the Appendix.

\subsection{Related Work}
\label{section:related}
Research on privacy in statistical data analysis has a long history,
going back at least to \cite{Dalenius:77b}. We refer to
\cite{duncan:91} for discussion and further pointers into this
literature; recent work includes \cite{Sanil:04}.  Recent approaches
to privacy include data swapping \cite{fienberg2004dsv}, $k$-anonymity
\cite{sweeney2002kam}, and cryptographic approaches (for instance,
\cite{pinkas2002ctp, feigen:06}).  Much of the work on data
perturbation for privacy (for example,
\cite{evfimievski2004ppm,kim2003mnm,warner1965rrs}) focuses on
additive or multiplicative perturbation of individual records, which
may not preserve similarities or other relationships within the
database. Prior to~\cite{ZLW07}, in~\cite{Agrawal:01},
an information-theoretic quantification of privacy was proposed.

  A body of recent work (for
example,~\cite{DN03,DN04,Dwork:06,DMT07,NRS07,MT07}) explores
the tradeoffs between privacy and utility while developing the
definitions and theory of {\em differential privacy}.  The two main
techniques used to achieve differential privacy to date have been
additive perturbation of individual database queries by Laplace noise
and the ``exponential mechanism''~\cite{MT07}.  
In contrast, we provide a polynomial time non-interactive algorithm for 
guaranteeing differential privacy.
Our goal is to show that, despite the general difficulty
of achieving the differential privacy guarantee, it is possible to do
so with an efficient algorithm for a specific class of functions.


The work of \cite{Liu:06} and \cite{ZLW07}, like the work presented
here, both consider low rank random linear transformations of the data
$X$, and discuss privacy and utility.  Liu et al.~\cite{Liu:06} argue
heuristically that random projection should preserve utility for data
mining procedures that exploit correlations or pairwise distances in
the data.  Their privacy analysis is restricted to observing that
recovering $X$ from $\Phi X$ requires solving an under-determined
linear system.  Zhou et al.~\cite{ZLW07} provide information-theoretic
privacy guarantees, showing that the information rate $\frac{I(X;
  \X)}{np} \to 0$ as $n \to \infty$.  Their work casts privacy in
terms of the rate of information communicated about $X$ through $\X$,
maximizing over all distributions on $X$.
Hence their analysis provides 
privacy guarantees in an average sense, whereas in this work we
prove differential privacy-style guarantees that aim to apply to every 
participant in the database semantically.
 
\section{Definitions and preliminaries}
\label{sec:definitions}
For a database $D$, let $A$ be a database access mechanism.  We
present non-interactive database privacy mechanisms, meaning that
$A(D)$ induces a distribution over sanitized output databases $\D'$.
We first recall the standard differential privacy definition from
Dwork~\cite{Dwork:06}.
\begin{definition}{\textnormal{(\sc{$\alpha$-Differential Privacy})}~\cite{Dwork:06}}
A randomized function $A$ gives $\alpha$-differential privacy if for all data sets
$D_1$ and $D_2$ differing on at most one element, and all 
$S \subseteq {\rm Range}(A)$,
$\prob{A(D_1) \in S} \leq e^{\alpha} \prob{A(D_2) \in S}.$
\end{definition}
We now formalize our notation.

{\noindent {\bf Notation: }} 
Let $\D$ be a collection of all records (potentially coming
from some underlying distribution) and $\sigma(\D)$ represent the entire 
set of input databases with elements drawn from $\D$.
Let $\Sset_n = \{X_1, X_2, \ldots\} \subset \sigma(\D)$, where 
$X_i \in \sigma(\D), \forall i$, denote a set of databases, each with $n$ 
elements drawn from $\D$. Although differential privacy is defined with 
respect to all $D, E \in \sigma(D)$, we constrain the definition of 
distributional privacy to the scope of $\Sset_n$, which becomes clear in 
Definition~\ref{def:dist-privacy}. 
We let $\cal D'$ be the entire set of possible output databases.
\begin{definition}
A privacy algorithm $A$ takes an input database $D \in \sigma(\cal D)$ and outputs
a probability measure $P_D$ on ${\cal D'}$, where $\D'$ is allowed to be different
from $\sigma(\D)$.
Let ${\cal P}$ denote all probability measures on ${\cal D'}$.
Then a privacy algorithm is a map
$A: \sigma(\cal D) \to {\cal P}$
where $A(D)= P_D, \forall D \in \sigma(\D)$.
\end{definition}

We now define differential privacy for continuous output. 
We introduce an additional parameter $\delta$ 
which measures how different two databases are according to $V$ below.
\begin{definition}
\label{def:local-density}
Let $V(D, E)$ be the distance between $D$ and $E$ according to a certain
metric, which is related to the utility we aim to provide. 
Let $d(D, E)$ denote the number of rows in which $D$ and $E$ differ.
$\delta$-constrained $\alpha$-Differential ($(\alpha, \delta)$-Differential 
Privacy)
requires the following condition,
\begin{equation}\label{eq::diffp}
\sup_{D, E: {d(D, E) = 1, V(D, E) \leq \delta}} 
\Delta(P_{D},P_{E}) \leq e^{\alpha},
\end{equation}
where 
$\Delta(P,Q) = {\rm ess}\sup_{D\in {\cal D'}}
\frac{dP}{dQ} (D)$ denotes the essential 
supremum over $\cal D'$ for the Radon-Nikodym derivative $dP/dQ$.
\end{definition}
Let $\Sset_n = \{X_1, X_2, \ldots \}$ be a set of databases
of $n$ records. Let $\Delta_{\max}(\Sset_n)$ bound the pairwise distance 
between $X_i, X_j \in \Sset_n, \forall i, j$.
We now introduce a notion of distributional privacy, 
that is similar in spirit to that in~\cite{BLR08}.
\begin{definition}
{\textnormal{\sc{(Distributional Privacy for Continuous Outcome)}}}
\label{def:dist-privacy}
An algorithm $A$ satisfies \emph{$(\alpha, \delta)$-distributional privacy}
on $\Sset_n$, for which a global parameter $\Delta_{\max}(\Sset_n)$ 
is specified, if for any two databases $X_1, X_2 \in \Sset_n$ such that 
each consists of $n$ elements drawn from $\D$, 
where $X_1 \cap X_2$ may not be empty, and for all sanitized outputs 
$\X \in \D'$, 
\be
\label{eq::distrib-comp}
 f_{X_1}(\X) \leq e^\alpha f_{X_2}(\X), \; \; \; 
\forall X_1, X_2 \text { s.t. } V(X_1, X_2)  \leq \delta
\ee
where $f_{X_j}(\cdot)$ is the density function for the conditional 
distribution with law $\Law{\cdot| X_j}, \forall i$ given $X_j$.
\end{definition}
Note that this composes nicely if one is considering databases that 
differ in multiple rows. In particular, randomness in $X_j$ is not directly 
exploited in the definition as we treat 
elements in $X_j \in \sigma(\D)$ as fixed data. One could assume that
they come from an underlying distribution, e.g., a multivariate
Gaussian $N(0, \Sigma^*)$, and infer the distance between $\Sigma_i$
and its population correspondent $\Sigma^*$.
We now show that distributional privacy is a stronger concept than 
differential privacy.
\begin{theorem}
\label{def:disttrib-diff}
Given $\Sset_n$, if $A$ satisfies $(\alpha, \delta)$-distributional privacy as 
in Definition~\ref{def:dist-privacy} for all $X_j \in \Sset_n$,
then $A$ satisfies $(\alpha, \delta)$-Differential Privacy
as in Definition~\ref{def:local-density} for all $X_j \in \Sset$.
\end{theorem}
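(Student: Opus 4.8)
The plan is to recognize that the differential privacy condition of Definition~\ref{def:local-density} is nothing but the distributional privacy condition of Definition~\ref{def:dist-privacy} specialized to the subfamily of database pairs that differ in exactly one row. Distributional privacy quantifies over \emph{all} pairs $X_1, X_2 \in \Sset_n$ with $V(X_1, X_2) \le \delta$ regardless of how many rows they differ in, whereas~\eqref{eq::diffp} only ranges over pairs $D, E$ with $d(D,E) = 1$ and $V(D,E) \le \delta$. Since the latter family is contained in the former, the hypothesis supplies more constraints than the conclusion demands, and the implication should reduce to a translation between the two ways the guarantee is stated.

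First I would fix an arbitrary pair $D, E \in \Sset_n$ realizing (or approaching) the supremum in~\eqref{eq::diffp}, so that $d(D,E) = 1$ and $V(D,E) \le \delta$. Because a one-row difference is in particular \emph{some} difference, this pair lies in the scope of the distributional privacy hypothesis, and~\eqref{eq::distrib-comp} applies to give the pointwise density bound $f_D(\X) \le e^\alpha f_E(\X)$ for every output $\X \in \D'$. I would then convert this into the Radon--Nikodym form required by~\eqref{eq::diffp}: writing $P_D, P_E$ for the output measures with densities $f_D, f_E$ against a common dominating measure on $\D'$, the derivative is the ratio $\frac{dP_D}{dP_E}(\X) = f_D(\X)/f_E(\X)$ wherever $f_E(\X) > 0$, so the density bound yields $\frac{dP_D}{dP_E} \le e^\alpha$ and hence $\Delta(P_D, P_E) = {\rm ess}\sup_{\X} \frac{dP_D}{dP_E}(\X) \le e^\alpha$. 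Taking the supremum over all admissible $D, E$ recovers~\eqref{eq::diffp}.

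The only real subtlety, and thus the part I would treat most carefully, is the measure-theoretic step rather than any combinatorial argument: I must justify that $f_D$ and $f_E$ are densities with respect to a common reference measure so that their ratio genuinely computes the Radon--Nikodym derivative, and that the pointwise inequality propagates to the essential supremum. Here the density bound is self-correcting, since $f_D(\X) \le e^\alpha f_E(\X)$ already forces $f_D(\X) = 0$ wherever $f_E(\X) = 0$; this gives $P_D \ll P_E$ for free and rules out any blow-up of the ratio on a set of positive measure, so the essential supremum is controlled without further hypotheses. What remains is merely to note that the Gaussian model of Section~\ref{sec:model}, after the truncation and renormalization of Section~\ref{sec:diff}, furnishes the common dominating measure that makes this translation legitimate.
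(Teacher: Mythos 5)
Your proposal is correct and follows essentially the same route as the paper: the differential privacy condition of Definition~\ref{def:local-density} is just the distributional privacy guarantee of Definition~\ref{def:dist-privacy} restricted to the subfamily of pairs with $d(D,E)=1$ and $V(D,E)\leq\delta$, so the implication is a specialization of the quantifier. Your additional care in passing from the pointwise density bound $f_D(\X)\leq e^{\alpha}f_E(\X)$ to the essential supremum of the Radon--Nikodym derivative is a welcome tightening of a step the paper leaves implicit, but it does not change the argument.
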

\begin{proof}
For the same constraint parameter $\delta$, 
if we guarantee that \eqref{eq::distrib-comp} is satisfied, 
for all $X_i, X_j \in \Sset_n$ that differ only in a single row such that
$V(X_i, X_j) \leq \delta$, we have shown the $\alpha$-differential 
privacy on $\Sset_n$; clearly, this type of guarantee is necessary in 
order to guarantee $\alpha$-distributional privacy over all 
$X_i, X_j \in \Sset_n$ that satisfy the $\delta$ constraint. 
\end{proof}

\section{Probability model and summary of results}
\label{sec:model}
Let $(X_i)$ represent the matrix corresponding to $X_i \in \Sset_n$. 
By default, we use $(X_i)_j \in \R^p, \forall j =1, \ldots, n$,
and $(X^T_i)_j \in \R^n, \forall j =1, \ldots p$ 
to denote row vectors and column vectors of matrix $(X_i)$ respectively.
Throughout this paper, we assume that given any $X_i \in \Sset_n$,
columns are normalized,
\be
\label{eq::normalize}
\twonorm{(X_i^T)_j}^2 = n, \forall j=1, \ldots, p, \forall X_i \in \Sset_n
\ee
which can be taken as the first step of our sanitization scheme.
Given $X_j$, $\Phi_{m \times n}$ induces a distribution over all 
$m \times p$ matrices in $\R^{m \times p}$ via $\X = \Phi X_j$, where
$\Phi_{ij} \sim N(0, 1/n), \forall i, j$.
Let $\mathcal{L}(\cdot|X_j)$ denote the conditional distribution given 
$X_j$ and $P_{\Sigma_j}$ denote its probability measure,
where $\Sigma_j = {X_j^T X_j/n}, \forall X_j \in \Sset_n$.
Hence $\X = (x_1, \ldots, x_m)^T$ is a Gaussian Ensemble composed of 
$m$ i.i.d. random vectors with
$\Law{x_i | X_j} \sim N(0, \Sigma_j),  \forall i =1, \ldots, m.$

Given a set of databases $\Sset_n = \{X_1, X_2, \ldots\}$, 	
we do assume there is a true parameter $\Sigma^*$ such that 
$\Sigma_1, \Sigma_2, \ldots$, where $\Sigma_j = X_j^T X_j/n$,
are just a sequence of empirical parameters 
computed from databases $X_1, X_2 \ldots \in \Sset_n$. Define
\begin{eqnarray}
\label{eq::global}
\Delta_{\max}(\Sset_n) :=
2 \sup_{X_j \in \Sset_n} \max_{\ell,k} 
\abs{\Sigma_j(\ell, k) - \Sigma^*(\ell, k)}. 
\end{eqnarray}
Although we do not suppose we know $\Sigma^*$, we do compute 
$\Sigma_i, \forall i$. 
Thus $\Delta_{\max}(\Sset_n)$ provides an upper  
bound on the perturbations between any two databases $X_i, X_j \in \Sset_n$:
\begin{eqnarray}
\label{eq::global-2}
\max_{\ell,k} \abs{\Sigma_i(\ell, k) - \Sigma_j(\ell, k)} \leq 
\Delta_{\max}(\Sset_n).
\end{eqnarray}
We now relate two other parameters that measure pairwise distances
between elements in $\Sset_n$ to $\Delta_{\max}(\Sset_n)$.
For a symmetric matrix $M$, 
$\lambda_{\min}(M)$, $\lambda_{\max}(M) = \twonorm{M}$ are the smallest and 
largest eigenvalues respectively and the Frobenius norm is given by 
$\fnorm{M} = \sqrt{\sum_i \sum_j M_{ij}^2}$.
\begin{proposition}
\label{prop:variation-measure}
Subject to normalization as in~\eqref{eq::normalize}, w.l.o.g., for 
any two databases $X_1, X_j$, let $\Delta = \Sigma_1 - \Sigma_j$ and 
$\Gamma = \Sigma_j^{-1} -\Sigma_1^{-1} = 
\Sigma_j^{-1}(\Sigma_1 - \Sigma_j)\Sigma_1^{-1} 
= \Sigma_j^{-1} \Delta \Sigma_1^{-1}$.
Suppose $\max_{\ell,k} \abs{(\Sigma_1 - \Sigma_j)_{\ell k}}
\leq \Delta_{\max}(\Sset_n), \forall j$ then
\begin{eqnarray}
\label{eq::Gamma-fnorm}
\fnorm{\Delta} & \leq & p \Delta_{\max}(\Sset_n) \; \; \text{ and } \\
\fnorm{\Gamma} & \leq & 
\frac{ \fnorm{\Delta}}
{\lambda_{\min}(\Sigma_1){\lambda_{\min}(\Sigma_j)}}. 
\end{eqnarray}
\end{proposition}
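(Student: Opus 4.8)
The plan is to establish the two inequalities separately, since the first is an elementary entrywise count and the second reduces to standard submultiplicativity once the given factorization of $\Gamma$ is in hand.

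For the first bound, I would observe that both $\Sigma_1$ and $\Sigma_j$ are $p \times p$ matrices (by the normalization~\eqref{eq::normalize} each covariance is indexed by the $p$ attributes), so $\Delta = \Sigma_1 - \Sigma_j$ has exactly $p^2$ entries, each satisfying $\abs{\Delta_{\ell k}} = \abs{(\Sigma_1-\Sigma_j)_{\ell k}} \leq \Delta_{\max}(\Sset_n)$ by hypothesis. Then
\[
\fnorm{\Delta} = \sqrt{\sum_{\ell=1}^{p}\sum_{k=1}^{p} \Delta_{\ell k}^2} \leq \sqrt{p^2\, \Delta_{\max}(\Sset_n)^2} = p\,\Delta_{\max}(\Sset_n),
\]
which is the claimed inequality.

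For the second bound, the factorization $\Gamma = \Sigma_j^{-1}\Delta\,\Sigma_1^{-1}$ is already supplied in the statement, so I would simply bound the Frobenius norm of this triple product. The key tool is the mixed submultiplicativity of the Frobenius norm against the spectral norm, namely $\fnorm{ABC} \leq \twonorm{A}\,\fnorm{B}\,\twonorm{C}$, which gives
\[
\fnorm{\Gamma} = \fnorm{\Sigma_j^{-1}\Delta\,\Sigma_1^{-1}} \leq \twonorm{\Sigma_j^{-1}}\,\fnorm{\Delta}\,\twonorm{\Sigma_1^{-1}}.
\]
Since $\Sigma_1$ and $\Sigma_j$ are symmetric positive definite, their inverses are as well, and the spectral norm of the inverse equals the reciprocal of the smallest eigenvalue, so $\twonorm{\Sigma_1^{-1}} = \lambda_{\max}(\Sigma_1^{-1}) = 1/\lambda_{\min}(\Sigma_1)$ and likewise $\twonorm{\Sigma_j^{-1}} = 1/\lambda_{\min}(\Sigma_j)$. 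Substituting these yields exactly
\[
\fnorm{\Gamma} \leq \frac{\fnorm{\Delta}}{\lambda_{\min}(\Sigma_1)\,\lambda_{\min}(\Sigma_j)}.
\]

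Neither step presents a genuine obstacle; the only points requiring care are justifying the mixed norm inequality $\fnorm{ABC} \leq \twonorm{A}\fnorm{B}\twonorm{C}$ (which follows by applying $\fnorm{MN} \leq \twonorm{M}\fnorm{N}$ and $\fnorm{MN} \leq \fnorm{M}\twonorm{N}$ in succession, using that $\fnorm{M}^2 = \sum_j \twonorm{Me_j}^2$ over columns) and confirming that positive definiteness of $\Sigma_1,\Sigma_j$ guarantees invertibility with $\twonorm{\Sigma^{-1}} = 1/\lambda_{\min}(\Sigma)$. Both are standard linear-algebra facts, so the main work is merely assembling them in the right order.
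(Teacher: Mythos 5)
Your proof is correct and follows essentially the same route the paper takes: the entrywise count giving $\fnorm{\Delta}\leq\sqrt{p^2\Delta_{\max}^2}=p\Delta_{\max}$, and the mixed submultiplicativity bound $\fnorm{\Sigma_j^{-1}\Delta\Sigma_1^{-1}}\leq\twonorm{\Sigma_j^{-1}}\fnorm{\Delta}\twonorm{\Sigma_1^{-1}}=\fnorm{\Delta}/(\lambda_{\min}(\Sigma_j)\lambda_{\min}(\Sigma_1))$, which is exactly the chain the paper uses inside the proof of Theorem~\ref{thm:main}. No gaps; your justification of the mixed norm inequality and of $\twonorm{\Sigma^{-1}}=1/\lambda_{\min}(\Sigma)$ covers the only points needing care.
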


Suppose we choose a reference point $\Sigma_1$
which can be thought of as an approximation to
the true value $\Sigma^*$.

\noindent{{\bf Assumption~$1$:}}
Let $\lambda_{\min}(\Sigma_1^{-1}) = \inv{\lambda_{\max}(\Sigma_1)} 
\geq C_{\min}$ for some 
constant $C_{\min} > 0$. 
Suppose $\twonorm{\Gamma} = o(1)$ and $\twonorm{\Delta} = o(1)$.

Assumption~$1$ is crucial in the sense that it guarantees that all 
matrices in $\Sset_n$ stay away from being singular
(see Lemma~\ref{lemma:banach}). We are now ready to state the 
first main result. 
Proof of the theorem appears in Section~\ref{sec:append-thm-special}.
\begin{theorem}
\label{thm:special-case}
Suppose Assumption~$1$ holds. 
Assuming that  $\twonorm{\Sigma_1}, \lambda_{\min}(\Sigma_1)$ and
$\lambda_{\min}(\Sigma_i), \forall X_i \in \Sset_n$ 
are all in the same order, and $m \geq \Omega(\ln 2np)$. 
Consider the worst case realization when 
$\fnorm{\Delta} = \Theta(p \Delta_{\max}(\Sset_n))$, 
where $\Delta_{\max} < 1$. 

In order to guard (distributional) privacy for all $X_i \in \Sset_n$
in the sense of Definition~\ref{def:dist-privacy}, it is sufficient if
\begin{equation}
\label{eq::sufficient-condition}
\Delta_{\max}(\Sset_n) = o\left(1/(p^2 \sqrt{m \ln 2np})\right).
\end{equation}
\end{theorem}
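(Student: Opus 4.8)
The plan is to bound the logarithm of the density ratio $\ln\bigl(f_{X_1}(\X)/f_{X_j}(\X)\bigr)$ uniformly over the retained output space and to show it is at most $\alpha$ (in fact $o(1)$) under the stated scaling. Since each row of $\X$ is i.i.d.\ centered Gaussian, the two conditional laws are products of $m$ Gaussians, and taking logs yields the clean decomposition
\begin{equation*}
\ln\frac{f_{X_1}(\X)}{f_{X_j}(\X)} = \frac{m}{2}\ln\frac{|\Sigma_j|}{|\Sigma_1|} + \frac12\sum_{i=1}^m x_i^\top\Gamma\,x_i,
\end{equation*}
a deterministic determinant term plus a random quadratic form in $\Gamma=\Sigma_j^{-1}-\Sigma_1^{-1}$. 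First I would record that $\sum_i x_i^\top\Gamma x_i = m\,\func{tr}(\Gamma\widehat\Sigma)$ with $\widehat\Sigma=\X^\top\X/m$, and that its population value $m\,\func{tr}(\Gamma\Sigma_j)=m\,\func{tr}(\Delta\Sigma_1^{-1})$ cancels the first-order part of the determinant term, since $\Sigma_j\Sigma_1^{-1}=I-\Delta\Sigma_1^{-1}$. What survives is the second-order remainder $\tfrac{m}{2}\bigl[\ln|I-\Delta\Sigma_1^{-1}|+\func{tr}(\Delta\Sigma_1^{-1})\bigr]$, which Assumption~1 and Proposition~\ref{prop:variation-measure} bound by $O(m\fnorm{\Delta}^2)=O(mp^2\Delta_{\max}^2)$, negligible at the target rate.

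The heart of the argument is the quadratic form, and here the measure-space truncation developed in Section~\ref{sec:diff} is essential: on all of $\R^{m\times p}$ the ratio is unbounded (the quadratic form grows along the eigendirections of $\Gamma$), so Definition~\ref{def:dist-privacy}, which demands a pointwise bound over every retained output, cannot hold without excising a tail. I would therefore fix a \emph{common} typical set $T=\{\X:\ \max_{\ell,k}|\widehat\Sigma(\ell,k)-\Sigma^\ast(\ell,k)|\le\tau\}$ for all databases in $\Sset_n$, output $\X$ only when $\X\in T$ and regenerate otherwise, and renormalize each conditional density by $P_{\Sigma_j}(T)$. On $T$ one has $\max_{\ell,k}|\widehat\Sigma(\ell,k)-\Sigma_j(\ell,k)|\le\tau+\Delta_{\max}$, so writing the centered quadratic form as $m\,\func{tr}\bigl(\Gamma(\widehat\Sigma-\Sigma_j)\bigr)$ and bounding it entrywise gives $\tfrac12\bigl|m\,\func{tr}(\Gamma(\widehat\Sigma-\Sigma_j))\bigr|\le\tfrac{m}{2}(\tau+\Delta_{\max})\sum_{\ell,k}|\Gamma_{\ell k}|\le\tfrac{m}{2}(\tau+\Delta_{\max})\,p\,\fnorm{\Gamma}$, using $\sum_{\ell,k}|\Gamma_{\ell k}|\le p\fnorm{\Gamma}$. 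With $\fnorm{\Gamma}=O(p\Delta_{\max})$ from Proposition~\ref{prop:variation-measure} and Assumption~1, this is $O\bigl(m\tau\,p^2\Delta_{\max}\bigr)$.

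Next I would choose $\tau$ at the Gaussian concentration scale: standard entrywise deviation bounds for $\widehat\Sigma$, together with a union bound over the $p^2$ coordinates and the databases of $\Sset_n$ (the source of the $\ln 2np$ factor, and of the requirement $m\ge\Omega(\ln 2np)$ that keeps the concentration in its sub-Gaussian regime), let me take $\tau=O(\sqrt{\ln(2np)/m})$ while still guaranteeing $P_{\Sigma_j}(T)=1-o(1)$ for every $j$. This simultaneously makes the renormalization factor $\ln\bigl(P_{\Sigma_j}(T)/P_{\Sigma_1}(T)\bigr)=o(1)$ and turns the quadratic-form bound into $O\bigl(p^2\Delta_{\max}\sqrt{m\ln 2np}\bigr)$. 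Collecting the three contributions, the log ratio is at most $O(mp^2\Delta_{\max}^2)+O\bigl(p^2\Delta_{\max}\sqrt{m\ln 2np}\bigr)+o(1)$, in which the middle term dominates; requiring it to be $o(1)$, hence below any fixed $\alpha$, yields exactly $\Delta_{\max}(\Sset_n)=o\bigl(1/(p^2\sqrt{m\ln 2np})\bigr)$.

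The main obstacle is reconciling the essential-supremum nature of Definition~\ref{def:dist-privacy} with a merely probabilistic tail bound: a high-probability concentration inequality for the quadratic form does not suffice, because the guarantee must hold pointwise over every retained output. The truncation resolves this, but it must be engineered so that (i) $T$ carries probability $1-o(1)$ under every $\Sigma_j$, so that regeneration is rare and the renormalization constants are mutually comparable; (ii) the retained supports coincide across databases, so the ratio never acquires a vanishing denominator---this is why I anchor a single set $T$ at $\Sigma^\ast$ rather than using a database-dependent one, invoking that all $\Sigma_j$ lie within $\Delta_{\max}/2$ of $\Sigma^\ast$ by~\eqref{eq::global}; and (iii) the worst-case quadratic-form bound over $T$, which grows with the radius $\tau$, is balanced against the lower bound $\tau\gtrsim\sqrt{\ln(2np)/m}$ forced by typicality. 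It is precisely this balance, through $\fnorm{\Delta}\le p\Delta_{\max}$ and $\sum_{\ell,k}|\Gamma_{\ell k}|\le p\fnorm{\Gamma}$, that produces the two factors of $p$ in the final rate; Lemma~\ref{lemma:banach} and Assumption~1 are used throughout to keep $\Sigma_1,\Sigma_j$ uniformly away from singularity so that $\fnorm{\Gamma}=O(p\Delta_{\max})$.
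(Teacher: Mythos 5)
Your argument is correct and reaches the theorem by essentially the same mathematics as the paper, but it is organized differently: the paper's own proof of this statement is a two-line plug-in of $\fnorm{\Delta}=p\Delta_{\max}(\Sset_n)$ into the bound \eqref{eq::main} of Theorem~\ref{thm:main}, checking that each of the resulting terms $p^2\Delta_{\max}\sqrt{m\ln 2np}$ and $mp^2\Delta_{\max}^2$ is $o(1)$ under \eqref{eq::sufficient-condition}, whereas you re-derive the content of Theorem~\ref{thm:main} inline. Within that re-derivation your variations are harmless and in places cleaner: you center the quadratic form at $\Sigma_j$ and anchor the truncation set at $\Sigma^\ast$ (the paper centers both at the reference $\Sigma_1$; the two choices differ only by $O(\Delta_{\max})$ adjustments that are absorbed into the same final rate), and you control the second-order determinant remainder by a direct Taylor bound $\tfrac{m}{2}\bigl|\ln|I-\Delta\Sigma_1^{-1}|+\operatorname{tr}(\Delta\Sigma_1^{-1})\bigr|=O(m\fnorm{\Delta}^2)$, where the paper routes this through the Kronecker-product integral representation of Proposition~\ref{prop:kron} and the eigenvalue sandwich of Lemma~\ref{lemma:kron-bounds}. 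Your version is more elementary and self-contained; the paper's modularization buys reusability of \eqref{eq::main} (it is invoked again for the binary example), and its explicit constants. The key structural points --- the $\ell_1$--$\ell_\infty$ H\"older step $|\operatorname{tr}(\Gamma(\widehat\Sigma-\Sigma_j))|\le\|\mvec{\Gamma}\|_1\max_{\ell,k}|(\widehat\Sigma-\Sigma_j)_{\ell k}|$ with $\|\mvec{\Gamma}\|_1\le p\fnorm{\Gamma}$, the truncation-plus-renormalization needed to make an essential-supremum (rather than high-probability) guarantee possible, and the identification of $p^2\Delta_{\max}\sqrt{m\ln 2np}$ as the dominant term --- all match the paper's.
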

The following lemma is a standard result on existence conditions for 
$\Sigma_j^{-1}$ given $\Sigma_1^{-1}$. It also shows that 
all eigenvalue conditions in Theorem~\ref{thm:special-case} indeed hold
given Assumption~$1$.
\begin{lemma}
\label{lemma:banach}
Let $\lambda_{\min}(\Sigma_1) > 0$.
Let $\Delta = \Sigma_1 - \Sigma_j$
and $\twonorm{\Delta} < \lambda_{\min}(\Sigma_1)$.
Then 
$\lambda_{\min}(\Sigma_j) \geq \lambda_{\min}(\Sigma_1) - \twonorm{\Delta}.$
\end{lemma}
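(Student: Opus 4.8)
The plan is to prove this via the variational (Rayleigh quotient) characterization of the smallest eigenvalue of a symmetric matrix, which is precisely the content of Weyl's inequality in this setting. Since $\Sigma_1$ and $\Sigma_j$ are symmetric (both are Gram matrices of the form $X^T X/n$, symmetric and positive semidefinite after the normalization in~\eqref{eq::normalize}), their eigenvalues are real and we may use $\lambda_{\min}(M) = \min_{\twonorm{x}=1} x^T M x$ for any such $M$. So the difference $\Delta = \Sigma_1 - \Sigma_j$ is also symmetric, and $\twonorm{\Delta}$ equals the largest magnitude among its eigenvalues.

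First I would substitute $\Sigma_j = \Sigma_1 - \Delta$ into the Rayleigh quotient: for any unit vector $x$,
\[
x^T \Sigma_j x = x^T \Sigma_1 x - x^T \Delta x \geq \lambda_{\min}(\Sigma_1) - x^T \Delta x .
\]
Next I would control the perturbation term. Because $\Delta$ is symmetric, $\abs{x^T \Delta x} \leq \twonorm{\Delta}\twonorm{x}^2 = \twonorm{\Delta}$, hence in particular $x^T \Delta x \leq \twonorm{\Delta}$. Combining the two displays gives $x^T \Sigma_j x \geq \lambda_{\min}(\Sigma_1) - \twonorm{\Delta}$ uniformly over all unit vectors $x$. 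Taking the minimum over such $x$ on the left-hand side then yields $\lambda_{\min}(\Sigma_j) \geq \lambda_{\min}(\Sigma_1) - \twonorm{\Delta}$, which is exactly the claim.

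Finally, I would observe that the hypothesis $\twonorm{\Delta} < \lambda_{\min}(\Sigma_1)$ forces the right-hand side to be strictly positive, so the bound simultaneously certifies $\lambda_{\min}(\Sigma_j) > 0$, and hence that $\Sigma_j$ is invertible---the ``existence condition'' for $\Sigma_j^{-1}$ alluded to just before the statement and motivating the label \emph{banach}. Indeed, one could alternatively factor $\Sigma_j = \Sigma_1(I - \Sigma_1^{-1}\Delta)$ and invoke the Banach/Neumann-series lemma, since $\twonorm{\Sigma_1^{-1}\Delta} \leq \twonorm{\Delta}/\lambda_{\min}(\Sigma_1) < 1$ guarantees $I - \Sigma_1^{-1}\Delta$ is invertible; but the Rayleigh-quotient argument delivers the sharper quantitative eigenvalue estimate directly. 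There is no substantive obstacle here: the only steps requiring care are the appeal to symmetry (so that the spectral norm governs $\abs{x^T\Delta x}$) and correctly tracking the direction of the inequality when passing from $\abs{x^T\Delta x}$ to the signed quantity $x^T\Delta x$.
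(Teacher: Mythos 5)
Your proof is correct. Note that the paper itself offers no proof of this lemma --- it is introduced as ``a standard result on existence conditions for $\Sigma_j^{-1}$'' and never proved in the appendix --- so there is nothing to compare against; your Rayleigh-quotient (Weyl) argument is the canonical way to fill that gap. Every step checks out: $\Sigma_1$ and $\Sigma_j$ are symmetric Gram matrices, so $\Delta$ is symmetric, the variational characterization $\lambda_{\min}(M)=\min_{\twonorm{x}=1}x^TMx$ applies, and the bound $x^T\Delta x\leq\abs{x^T\Delta x}\leq\twonorm{\Delta}$ (which in fact needs only Cauchy--Schwarz, not symmetry) gives the claim after minimizing over unit vectors. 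Your closing observation is also apt: the lemma's label suggests the authors had the Banach/Neumann-series route in mind (factoring $\Sigma_j=\Sigma_1(I-\Sigma_1^{-1}\Delta)$ and using $\twonorm{\Sigma_1^{-1}\Delta}<1$), which certifies invertibility but does not by itself yield the quantitative eigenvalue estimate $\lambda_{\min}(\Sigma_j)\geq\lambda_{\min}(\Sigma_1)-\twonorm{\Delta}$ that the lemma actually asserts and that the rest of the paper uses; your direct argument delivers exactly that.
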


Next we use the result by Zwald and Blanchard for PCA as an instance 
from~\eqref{eq::lip-func} to illustrate the tradeoff between parameters. 
Proof of Theorem~\ref{thm:PCA} appears in Section~\ref{sec:PCA}.
\begin{proposition}\textnormal{(\cite{ZB05})}
\label{pro:ZB05}
Let $A$ be a symmetric positive Hilbert-Schmidt operator of Hilbert space 
$\mathcal{H}$ with simple nonzero eigenvalues $\lambda_1 > \lambda_2 > \ldots$.
Let $D>0$ be an integer such that $\lambda_D > 0$ and 
$\delta_D = \inv{2}(\lambda_D - \lambda_{D+1})$. Let $B \in HS(\mathcal{H})$
be another symmetric operator such that $\fnorm{B} \leq \delta_D/2$ and 
$A + B$ is still a positive operator. Let $P^D(A)$ (resp. $P^D(A + B)$) denote
the orthogonal projector onto the subspace spanned by the first $D$ eigenvectors 
$A$ (resp. $(A+B)$). Then these satisfy
\be
\label{eq:ZB}
\fnorm{P^D(A) - P^D(A + B)} \leq {\fnorm{B}}/{\delta_D}.
\ee
\end{proposition}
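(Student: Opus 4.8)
The plan is to reconstruct the Zwald--Blanchard bound through the Riesz (holomorphic functional calculus) representation of spectral projectors, combined with a single resolvent estimate. For a self-adjoint operator $A$ and a positively oriented closed contour $\mathcal{C} \subset \mathbb{C}$ enclosing precisely the eigenvalues to be retained, the orthogonal projector onto the corresponding invariant subspace is $P = \frac{1}{2\pi i}\oint_{\mathcal{C}}(zI - A)^{-1}\,dz$. First I would fix $c = \frac{1}{2}(\lambda_D + \lambda_{D+1})$, which lies at distance exactly $\delta_D$ from both $\lambda_D$ and $\lambda_{D+1}$, and take $\mathcal{C}$ to enclose $\lambda_1,\dots,\lambda_D$ while crossing the real axis at $c$; a convenient representative is the vertical line $\{\mathrm{Re}\,z = c\}$ closed by a right-hand arc. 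Since $A$ is positive Hilbert--Schmidt, only the finitely many eigenvalues $\lambda_1,\dots,\lambda_D$ lie to the right of $c$, so this contour captures exactly the leading $D$-dimensional subspace of $A$.

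Second, I would verify that the \emph{same} contour represents $P^D(A+B)$. The hypothesis $\fnorm{B}\le \delta_D/2$ gives $\twonorm{B}\le \fnorm{B}\le \delta_D/2$, so by Weyl's inequality every eigenvalue of $A+B$ lies within $\delta_D/2$ of an eigenvalue of $A$. Hence the top $D$ eigenvalues of $A+B$ are all $\ge \lambda_D - \delta_D/2 = c + \delta_D/2 > c$, while its remaining eigenvalues are all $\le \lambda_{D+1} + \delta_D/2 = c - \delta_D/2 < c$; thus $\mathcal{C}$ encloses exactly the leading $D$ eigenvalues of $A+B$ as well, and both projectors are integrals of their resolvents over the common contour.

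Next I would subtract the two representations and apply the resolvent identity $(zI-A)^{-1} - (zI-A-B)^{-1} = -(zI-A)^{-1}B(zI-A-B)^{-1}$, so that $P^D(A) - P^D(A+B) = -\frac{1}{2\pi i}\oint_{\mathcal{C}}(zI-A)^{-1}B(zI-A-B)^{-1}\,dz$. Crucially, the integrand now carries two resolvents and therefore decays like $1/\abs{z}^2$, which lets me push the contour out to the full vertical line $\{\mathrm{Re}\,z = c\}$ with the closing arc contributing nothing in the limit. Taking Frobenius norm and using $\fnorm{XBY}\le \twonorm{X}\,\fnorm{B}\,\twonorm{Y}$ reduces matters to the operator norms of the two resolvents on the line; by the spectral theorem each equals the reciprocal distance from $z$ to the corresponding spectrum, namely $1/\sqrt{\delta_D^2 + t^2}$ for $A$ and at most $1/\sqrt{(\delta_D/2)^2 + t^2}$ for $A+B$ at $z = c + it$, using the eigenvalue locations established above.

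Everything then collapses to the scalar estimate $\fnorm{P^D(A) - P^D(A+B)} \le \frac{\fnorm{B}}{2\pi}\int_{-\infty}^{\infty}\frac{dt}{\sqrt{\delta_D^2 + t^2}\,\sqrt{(\delta_D/2)^2 + t^2}}$, and it remains to show this integral is at most $2\pi/\delta_D$, which yields precisely $\fnorm{B}/\delta_D$; bounding the first radical below by the second and using $\int_{-\infty}^{\infty}((\delta_D/2)^2 + t^2)^{-1}\,dt = 2\pi/\delta_D$ does it. I expect the main obstacle to be the functional-analytic bookkeeping rather than this computation: one must verify that $\twonorm{B}\le \delta_D/2$ keeps the spectra of $A$ and $A+B$ strictly on opposite sides of $\mathcal{C}$ so that no eigenvalue crosses the contour (this is exactly where the hypothesis is used, via $\twonorm{B}\le \fnorm{B}$), and that the arc closing the vertical line genuinely vanishes, which hinges on taking the difference of resolvents before sending the contour to infinity so that the integrand decays quadratically. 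A bounded circular contour would avoid the arc issue but inflates the constant through its length, so the unbounded vertical contour is what makes the sharp factor $1/\delta_D$ attainable.
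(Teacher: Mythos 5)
The paper does not prove this proposition; it is quoted verbatim from Zwald and Blanchard \cite{ZB05}, so there is no internal proof to compare against. Your reconstruction is correct and is essentially the standard (and, as far as the cited source goes, the original) argument: represent both spectral projectors as Riesz integrals of the resolvent over the common separating contour through $c=\inv{2}(\lambda_D+\lambda_{D+1})$, use Weyl's inequality together with $\twonorm{B}\leq\fnorm{B}\leq\delta_D/2$ to check that the spectra of $A$ and $A+B$ stay on the correct sides of the contour (you retain a margin of $\delta_D/2$ on each side, even in the boundary case, since $\lambda_D-c=c-\lambda_{D+1}=\delta_D$), subtract via the resolvent identity so the integrand gains quadratic decay, and only then discard the closing arc. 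The two places where such arguments usually go wrong --- an eigenvalue crossing the contour, and the arc contribution for a single resolvent not vanishing --- are both addressed explicitly. The final estimate also checks out: bounding $(\delta_D^2+t^2)^{-1/2}$ by $((\delta_D/2)^2+t^2)^{-1/2}$ and using $\int_{-\infty}^{\infty}((\delta_D/2)^2+t^2)^{-1}\,dt=2\pi/\delta_D$ cancels the $1/(2\pi)$ prefactor and yields exactly $\fnorm{B}/\delta_D$. One point worth stating explicitly in a written version: $0\in\sigma(A)$ for a compact operator on infinite-dimensional $\mathcal{H}$, but since $\lambda_{D+1}>0$ one has $c-0=c>\delta_D$, so the origin never dominates the distance estimates $\mathrm{dist}(c+it,\sigma(A))\geq\sqrt{\delta_D^2+t^2}$ and $\mathrm{dist}(c+it,\sigma(A+B))\geq\sqrt{(\delta_D/2)^2+t^2}$.
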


Subject to measure truncation of at most $1/n^2$ in each 
$P_{\Sigma_j}, \forall X \in \Sset_n$, as we show in Section~\ref{sec:diff},
we have,
\begin{theorem}
\label{thm:PCA}
Suppose Assumption~$1$ holds.
If we allow $\Delta_{\max}(\Sset_n) = O(\sqrt{{\log p}/{n}})$,
then we essentially perform PCA on the compressed sample 
covariance matrix $\X^T \X/m$ effectively in the sense of 
Proposition~\ref{pro:ZB05}: that is, in the form of~\eqref{eq:ZB} with 
$A = \frac{X^T X}{n}$ and  $B = \frac{\X^T \X}{n} - A$, 
where $\fnorm{B} = o(1)$ for $m = \Omega(p^2 \ln 2np)$. 
On the other hand, the databases in $\Sset_n$ are private in the sense of 
Definition~\ref{def:dist-privacy}, 
so long as $p^2 = O\left({\sqrt{n/m}}/{\log n}\right)$.
Hence in the worst case, we require 
$$p = o\left({n^{1/6}}/{\sqrt{\ln 2np}}\right).$$
\end{theorem}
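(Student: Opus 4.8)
The plan is to treat the two halves of the statement---the PCA utility claim and the distributional privacy claim---separately, and then to reconcile the parameter constraints they impose in order to read off the worst-case relation between $p$ and $n$. For the utility half I must produce a concentration bound showing that the compressed empirical covariance is Frobenius-close to the uncompressed one, so that Proposition~\ref{pro:ZB05} applies with $\fnorm{B}=o(1)$; for the privacy half I invoke Theorem~\ref{thm:special-case}, substitute the admissible value of $\Delta_{\max}(\Sset_n)$, and simplify. The final bound on $p$ then comes from intersecting the two admissible regimes.

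First I would establish the utility claim. Writing $\X=\Phi X$ and recalling $\Phi_{ij}\sim N(0,1/n)$, the rows $x_1,\dots,x_m$ of $\X$ are i.i.d.\ $N(0,\Sigma)$ with $\Sigma=X^TX/n=A$, so the compressed empirical covariance $\X^T\X/m=\frac1m\sum_i x_ix_i^T$ is the sample covariance of $m$ Gaussian vectors and is unbiased for $A$. Hence $B=\X^T\X/m-A$ is a centered sample-covariance fluctuation. Each entry $(B)_{\ell k}$ is an average of $m$ products of jointly Gaussian coordinates, which is sub-exponential, so a Bernstein-type tail bound gives $\abs{(B)_{\ell k}}=O(\twonorm{\Sigma}\sqrt{\ln(2np)/m})$ off an event of probability at most $1/n^2$; this is exactly the measure truncation of at most $1/n^2$ referenced before the theorem. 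A union bound over the $p^2$ entries together with $\fnorm{B}\le p\max_{\ell,k}\abs{(B)_{\ell k}}$ then yields $\fnorm{B}=O(p\sqrt{\ln(2np)/m})$, which is $o(1)$ precisely when $m=\Omega(p^2\ln 2np)$. Checking that $\fnorm{B}\le\delta_D/2$, so that the hypotheses of Proposition~\ref{pro:ZB05} hold, and applying~\eqref{eq:ZB} then certifies that PCA on $\X^T\X/m$ recovers the top-$D$ projector of $A$ up to Frobenius error $\fnorm{B}/\delta_D=o(1)$.

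For the privacy half I would appeal directly to Theorem~\ref{thm:special-case}, whose sufficient condition~\eqref{eq::sufficient-condition} reads $\Delta_{\max}(\Sset_n)=o(1/(p^2\sqrt{m\ln 2np}))$. Substituting the admissible statistical rate $\Delta_{\max}(\Sset_n)=O(\sqrt{\log p/n})$---the order at which empirical covariances of $n$ samples fluctuate around $\Sigma^*$---and solving the resulting inequality for $p^2$ gives $p^2=o(\sqrt{n/m}\,/\sqrt{\ln 2np\,\log p})$; absorbing the $\sqrt{\log p}$ and $\sqrt{\ln 2np}$ factors into a single $\log n$ yields the condition $p^2=O(\sqrt{n/m}/\log n)$, which is the privacy requirement asserted in the theorem and guarantees Definition~\ref{def:dist-privacy} for every $X_i\in\Sset_n$.

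Finally I would combine the two regimes. Taking the tightest utility-feasible choice $m=\Theta(p^2\ln 2np)$ and inserting it into the privacy condition turns the latter into $p^2=O(\sqrt{n/(p^2\ln 2np)}/\log n)$, i.e.\ $p^3=O(\sqrt{n}/(\sqrt{\ln 2np}\,\log n))$; taking cube roots leaves $(\ln 2np)^{1/6}(\log n)^{1/3}$ in the denominator, which is of order $\sqrt{\ln 2np}$ since $\tfrac16+\tfrac13=\tfrac12$ when the logarithmic factors are comparable, so $p=o(n^{1/6}/\sqrt{\ln 2np})$ follows. I expect the main obstacle to be the utility concentration step: obtaining the per-entry sub-exponential tail with the right constant so that the union bound over $p^2$ entries costs only a $\sqrt{\ln 2np}$ factor while the discarded mass stays below $1/n^2$, and reconciling this truncation with the essential-supremum form of the privacy guarantee so that regenerating a fresh $\X$ on the rare discarded events does not leak. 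The remaining work---tracking the interplay of the $\log p$, $\ln 2np$, and $\log n$ factors so that the leading $n^{1/6}$ emerges cleanly---is routine but must be done carefully to land exactly on the stated exponent.
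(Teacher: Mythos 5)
Your proposal is correct and follows essentially the same route as the paper: bound $\fnorm{B}\le p\max_{j,k}\abs{B_{jk}}$ via an entrywise concentration of $\X^T\X/m$ at rate $\sqrt{\ln (2np)/m}$ with a union bound costing measure at most $1/n^2$ (the paper packages exactly this into Procedure~\ref{proc:multi-truncate} and Lemma~\ref{lemma:multi-measure-truncation}, centering at the reference $\Sigma_1$ and adding a $2\Delta_{\max}$ term by the triangle inequality, rather than re-deriving a Bernstein bound centered at $A$), then apply Proposition~\ref{pro:ZB05}, invoke Theorem~\ref{thm:special-case} with $\Delta_{\max}=O(\sqrt{\log p/n})$ for privacy, and intersect the two regimes at $m=\Theta(p^2\ln 2np)$ to get $p=o(n^{1/6}/\sqrt{\ln 2np})$. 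The only cosmetic difference is that the paper reuses the already-established truncation machinery for the utility bound instead of a fresh concentration argument.
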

As a special case, we look at the following example.
\begin{example}
\label{example:binary}
Let $X_1 = \{\vecx_1, \ldots, \vecx_n\}^T$ be a matrix of 
$\{-1, 1\}^{n \times p}$. 
A neighboring matrix $X_2$ is any matrix obtained via changing 
the signs on $\tau p$ bits, where $0 \leq \tau < 1$, on any $\vecx_i$.
\end{example}

\begin{corollary}
For the Example~\ref{example:binary}, it suffices 
if $p = o({n}/{\log n})^{1/4},$
in order to conduct PCA on compressed data, 
(subject to measure truncation of at most $1/n^2$ in each 
$P_{\Sigma_j}, \forall X \in \Sset_n$,) 
effectively in the sense of Proposition~\ref{pro:ZB05},
while preserve the $\alpha$-differential privacy for $\alpha = o(1)$.
\end{corollary}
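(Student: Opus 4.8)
The plan is to instantiate Theorem~\ref{thm:PCA} on the binary family of Example~\ref{example:binary} by first computing, exactly, the covariance sensitivity under a single-row change, and then balancing the resulting privacy ceiling on $m$ against the utility floor on $m$. First I would pin down the sensitivity. If $X_1,X_2\in\{-1,1\}^{n\times p}$ differ only in one row, with $a,b\in\{-1,1\}^p$ the two versions of that row differing in $\tau p$ coordinates, then all other rows cancel and
\[
\Delta=\Sigma_1-\Sigma_2=\tfrac1n\left(aa^T-bb^T\right),
\]
a matrix of rank at most $2$. Since $\twonorm{a}^2=\twonorm{b}^2=p$ and $a^Tb=(1-2\tau)p$, the nonzero eigenvalues of $aa^T-bb^T$ are $\pm 2p\sqrt{\tau(1-\tau)}$, whence $\twonorm{\Delta}=2\sqrt{\tau(1-\tau)}\,p/n=O(p/n)$, $\fnorm{\Delta}=O(p/n)$, and $\max_{\ell,k}\abs{\Delta_{\ell k}}=2/n$. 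Using the identity $\Gamma=\Sigma_2^{-1}\Delta\Sigma_1^{-1}$ from Proposition~\ref{prop:variation-measure}, Lemma~\ref{lemma:banach} then gives $\twonorm{\Gamma}\le\twonorm{\Delta}/(\lambda_{\min}(\Sigma_1)\lambda_{\min}(\Sigma_2))=O(p/n)=o(1)$, so Assumption~$1$ and all eigenvalue hypotheses hold and the covariances stay nonsingular; note also that for binary data the columns are automatically normalized as in~\eqref{eq::normalize}, with $\func{tr}\,\Sigma_j=p$.

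Next I would certify $\alpha$-differential privacy directly from the essential supremum of the log density ratio. Writing $\X=(x_1,\dots,x_m)^T$,
\[
\ln\frac{f_{X_1}(\X)}{f_{X_2}(\X)}=\frac m2\ln\frac{\abs{\Sigma_2}}{\abs{\Sigma_1}}+\frac12\sum_{t=1}^m x_t^T\Gamma x_t .
\]
I would truncate each $P_{\Sigma_j}$ by discarding the $\le 1/n^2$-measure tail on which $\sum_t\twonorm{x_t}^2$ exceeds its concentration value $\approx m\,\func{tr}\,\Sigma_j=mp$. On the retained region the quadratic term is bounded uniformly by $\tfrac12\twonorm{\Gamma}\sum_t\twonorm{x_t}^2\le \tfrac12\twonorm{\Gamma}\,mp=O(mp^2/n)$, while the determinant term is $O(mp/n)$; hence $\alpha=O(mp^2/n)$, which is $o(1)$ as soon as $m=o(n/p^2)$.

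Finally I would combine this privacy ceiling with the utility floor. For PCA in the sense of Proposition~\ref{pro:ZB05} we need the compression error $B$ of Theorem~\ref{thm:PCA} to satisfy $\fnorm{B}=o(1)$, which holds once $m=\Omega(p^2\ln 2np)$; the statistical spread $\Delta_{\max}(\Sset_n)=O(\sqrt{\log p/n})$ required there holds automatically for binary data by Hoeffding together with a union bound over the $p^2$ entries, so the eigengap condition $\fnorm{B}\le\delta_D/2$ is met. The two requirements $p^2\ln 2np\lesssim m\lesssim n/p^2$ admit a nonempty window for $m$ precisely when $p^4\ln 2np=o(n)$, i.e. $p=o\big((n/\log n)^{1/4}\big)$; choosing, for instance, $m=\Theta(p^2\ln 2np)$ then delivers both guarantees.

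The main obstacle is the privacy step: differential privacy asks for a worst-case (essential-sup) bound on the log-ratio, so one cannot lean on concentration of $\sum_t x_t^T\Gamma x_t$ but must first excise the rare large-norm outputs and then control the quadratic form uniformly over what remains. The improvement over the generic $n^{1/6}$ rate of Theorem~\ref{thm:PCA} comes entirely from using the spectral sensitivity $\twonorm{\Gamma}=O(p/n)$ afforded by the rank-$2$ structure of $\Delta$, in place of the generic bound $\fnorm{\Delta}\le p\,\Delta_{\max}$; verifying that this uniform bound survives the truncation with only $1/n^2$ measure removed, so that the essential supremum really is $O(mp^2/n)$, is the crux of the argument.
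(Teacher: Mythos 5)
Your proposal is correct and lands on the same final bookkeeping as the paper: the corollary follows by intersecting the privacy ceiling $mp^2=o(n)$ with the utility floor $m=\Omega(p^2\ln 2np)$ from Theorem~\ref{thm:PCA}, which forces $p^4\ln 2np=o(n)$, i.e.\ $p=o\bigl((n/\log n)^{1/4}\bigr)$. Your derivation of the privacy ceiling, however, takes a genuinely different route. The paper gets it by specializing the general bound~\eqref{eq::main} of Theorem~\ref{thm:main} to the binary case (Section~\ref{sec:binary}), plugging in $\fnorm{\Delta}\le p/n$ and $\Delta_{\max}\le 2/n$ from Proposition~\ref{prop:binary-Delta}; the mechanism there is the cancellation $\frac{m}{2}\,{\rm tr}\bigl(\Gamma(\X^T\X/m-\Sigma_1)\bigr)$, controlled by the entrywise concentration event of Procedure~\ref{proc:multi-truncate}, together with Lemma~\ref{lemma:kron-bounds} for the log-determinant. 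You instead bound the quadratic form wholesale by $\twonorm{\Gamma}\sum_t\twonorm{x_t}^2$, using the rank-$2$ spectral structure of $\Delta$ to get $\twonorm{\Gamma}=O(p/n)$ and a trace-concentration truncation to get $\sum_t\twonorm{x_t}^2\le(1+o(1))mp$. This is coarser: it forgoes the $\sqrt{\ln 2np/m}$ factor that the cancellation buys, so the paper's bound $\alpha=O(p^2\sqrt{m\ln 2np}/n)$ would in principle tolerate $p$ up to $o\bigl((n/\log n)^{1/3}\bigr)$, whereas yours gives $O(mp^2/n)$ and hence exactly the stated $1/4$ exponent; since the paper's appendix theorem likewise only invokes the conservative condition $p=o(\sqrt{n/m})$, both arguments certify the corollary as stated. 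Two reconciliations are worth making explicit: (i) your trace truncation need not be a separate excision, because on the retained set of Procedure~\ref{proc:multi-truncate} one already has $\abs{{\rm tr}(\X^T\X/m)-{\rm tr}(\Sigma_1)}\le p\bigl(C\sqrt{\ln 2np/m}+\Delta_{\max}\bigr)=o(p)$, which implies your bound; keeping the paper's truncation is preferable anyway, since the PCA bound on $\fnorm{B}$ is derived from~\eqref{eq::multivariate} and would otherwise hold only with probability $1-1/n^2$ rather than on every retained output. (ii) For the binary example the relevant spread is simply $\Delta_{\max}\le 2/n$, so the Hoeffding digression is unnecessary, and the lower bound on $\lambda_{\min}(\Sigma_1)$ is a standing hypothesis (Assumption~1), not a consequence of the binary structure.
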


\section{Distributional privacy with bounded $\Delta_{\max}(\Sset_n)$}
\label{sec:diff}
In this section, we show how we can {\it modify} the output events 
$\X$ to effectively hide some large-tail events.
We make it clear how these tail events are connected to a particular 
type of utility. 
Given $X_i$, let $\X = \Phi X_i = (x_1, \ldots, x_m)^T$.
Let $f_{\Sigma_i}(x_j) = 
\expf{-\inv{2} x_j^T \Sigma_i^{-1} x_j}/{|\Sigma_i|^{1/2} (2\pi)^{p/2}}$
be the density for Gaussian distribution $N(0, \Sigma_i)$.
Before modification, the density function $f_{\Sigma_i}(\X)$ is
\begin{equation}
\label{eq::product-function}
f_{\Sigma_i}(\X) = \prod_{j=1}^m f_{\Sigma_i}(x_j).
\end{equation}
We focus on defining two procedures that lead to both 
distributional and differential types of privacy. 
Indeed, the proof of Theorem~\ref{thm:main} applies to both, as 
the distance metric $V(X_1, X_2)$ does not specify how many rows 
$X_1$ and $X_2$ differ in. 
We use $\Delta_{\max}$ as a shorthand for $\Delta_{\max}(\Sset_n)$
when it is clear. 
\begin{procedure}
\textnormal{(\sc{Truncation of the Tail for Random Vectors in $\R^p$ })}
\label{proc:multi-truncate}
We require $\Phi$ to be an independent random draw each time we generate 
a $\X$ for compression (or when we apply it to the 
same dataset for handling a truncation event).
W.l.o.g, we choose $\Sigma_1$ to be a reference point.
Now we only examine output databases $\X \in \R^{m \times p}$ such that
for $C = \sqrt{2(C_1 + C_2)}$, where $C_1 \approx 2.5$ and 
$C_2 \approx 7.7$,
\begin{equation}
\label{eq::multivariate} 
\max_{j, k}\abs{({\X^T \X }/{m})_{jk} - \Sigma_1(j,k)}
\leq  
C \sqrt{{\ln 2np}/{m}} + \Delta_{\max},
\end{equation}
where $\Delta_{\max}(\Sset_n) = O\left(\sqrt{\log n/n}\right)$.
Algorithmically, one can imagine that for an input $X$, 
each time we see an output $\X = \Phi X$ that does not satisfy our need in 
the sense of~\eqref{eq::multivariate}, we throw the output database $\X$ away, 
and generate a new random draw $\Phi'$ to calculate $\Phi' X$ and repeat until
$\Phi' X$ indeed satisfies~\eqref{eq::multivariate}. We also
note that the adversary neither sees the databases we throw away nor finds
out that we did so.
\end{procedure}

Given $X_i \in \Sset_n$, let $\mathbb{P}_{\Sigma_i}$ be the 
probability measure over random outcomes of $\Phi X_i$.
Upon truncation, 
\begin{procedure}\textnormal{(\sc{Renormalization})}
\label{proc:multi-renorm}
We set $f'_{\Sigma_i}(\X) = 0$ for all $\X \in  \R^{m \times p}$ 
belonging to set $E$, where $E = $
\begin{eqnarray}
\label{eq::cutoff-measure} 
\left\{\X: \max_{j, k}
\small{\abs{ \left(\frac{\X^T \X }{m}\right)_{jk} - \Sigma_1(j,k)}
> C \sqrt{\frac{\ln 2np}{m}} + \Delta_{\max} }\right \},
\end{eqnarray}
corresponds to the bad events that we truncate from the outcome in 
Procedure~\ref{proc:multi-truncate};
We then renormalize the density as in~\eqref{eq::product-function} on the 
remaining $\X$ that satisfies~\eqref{eq::multivariate} to obtain:
\be
\label{eq::renorm}
f'_{\Sigma_i}(\X) = 
\frac{f_{\Sigma_i}(\X)}{1 - \probb{\Sigma_i}{{E}}}.
\ee
\end{procedure}
\begin{remark}
\label{remark:ratio}
Hence $\frac{f'_{\Sigma_1}(\X)}{f'_{\Sigma_2}(\X)} 
=  \frac{f_{\Sigma_1}(\X)(1 - \probb{\Sigma_2}{{E}})}
{f_{\Sigma_2}(\X)(1 - \probb{\Sigma_1}{E})},$
which changes
$\alpha(m, \delta)$ that we bounded below based on original density
prior to truncation of $E$ by a constant in the order of 
$\ln (1+ \eps) = O(\eps)$, where $\eps = O(1/n^2)$.
Hence we safely ignore this normalization issue given it only changes
$\alpha(m, \delta)$ by $O(1/n^2)$.
\end{remark}

The following lemma bounds the probability on the events
that we truncate in Procedure~\ref{proc:multi-truncate}.
Proof of Lemma~\ref{multi-measure-truncation} appears in
Section~\ref{sec:append-measure-truncations}.
\begin{lemma}
\label{lemma:multi-measure-truncation}
According to any individual probability measure $\mathbb{P}_{\Sigma_i}$ 
which corresponds to the sample space for outcomes of $\Phi X_i$, 
suppose that the columns of $(X_i)$ 
have been normalized to have 
$\twonorm{(X_i^T)_j}^2 = n, \forall i, j=1, \ldots, p$
and $m \geq 2(C_1 + C_2) \ln 2 n p,$  then
for $E$ as defined in~\eqref{eq::cutoff-measure}, 
$\probb{\Sigma_i}{E} \leq \inv{n^2}$.
\end{lemma}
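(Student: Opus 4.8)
The plan is to bound $\probb{\Sigma_i}{E}$ by a union bound over the (at most) $p^2$ matrix entries, after first replacing the deviation from the reference matrix $\Sigma_1$ by a deviation from the true mean $\Sigma_i$. Under $\mathbb{P}_{\Sigma_i}$ the rows $x_1,\dots,x_m$ of $\X$ are i.i.d.\ $N(0,\Sigma_i)$, so each empirical entry $(\X^T\X/m)_{jk}=\tfrac1m\sum_{l=1}^m (x_l)_j(x_l)_k$ has mean exactly $\Sigma_i(j,k)$. Since~\eqref{eq::global-2} guarantees $\abs{\Sigma_i(j,k)-\Sigma_1(j,k)}\le\Delta_{\max}$ for every $j,k$, the triangle inequality shows that the event $E$ of~\eqref{eq::cutoff-measure} is contained in the event that some entry deviates from its \emph{own} mean by more than $C\sqrt{\ln 2np/m}$. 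Thus it suffices to prove that $\max_{j,k}\abs{(\X^T\X/m)_{jk}-\Sigma_i(j,k)}\le C\sqrt{\ln 2np/m}$ holds with probability at least $1-1/n^2$.

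The second step is a per-entry concentration estimate. By the normalization~\eqref{eq::normalize} the diagonal of $\Sigma_i$ equals $1$, so each diagonal term $m(\X^T\X/m)_{jj}=\sum_l (x_l)_j^2$ is exactly a $\chi^2_m$ variable, whose two-sided large deviation (of Laurent--Massart type) controls $\abs{(\X^T\X/m)_{jj}-1}$. For an off-diagonal term, writing $U=(x_l)_j$, $V=(x_l)_k$ with correlation $\rho=\Sigma_i(j,k)$, I would use the identity $UV=\tfrac14[(U+V)^2-(U-V)^2]$; because $\mathrm{Cov}(U+V,U-V)=\Sigma_i(j,j)-\Sigma_i(k,k)=0$, the variables $U+V$ and $U-V$ are \emph{independent} Gaussians, so $m(\X^T\X/m)_{jk}$ is a difference of two independent scaled chi-squares $\tfrac{1+\rho}{2}\chi^2_m-\tfrac{1-\rho}{2}\chi^2_m$ with mean $m\rho$. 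Applying the $\chi^2$ tail bound to each piece (using $\tfrac{1\pm\rho}{2}\le 1$) yields a sub-exponential, Bernstein-type tail of the form $\probb{\Sigma_i}{\abs{(\X^T\X/m)_{jk}-\Sigma_i(j,k)}>t}\le c_1\exp(-c_2 m t^2)$, valid while $t$ stays in the Gaussian ($\sqrt{t}$) regime of the chi-square deviation; the explicit constants $C_1\approx2.5$ (diagonal, one chi-square) and $C_2\approx7.7$ (off-diagonal, two chi-squares) are what track the two contributions.

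Finally I would calibrate the threshold and take the union bound. Choosing $t=C\sqrt{\ln 2np/m}$ with $C=\sqrt{2(C_1+C_2)}$ gives $mt^2=C^2\ln 2np$, and the hypothesis $m\ge 2(C_1+C_2)\ln 2np=C^2\ln 2np$ forces $t\le 1$, which is precisely the condition that keeps each per-entry tail in the Gaussian regime $e^{-cmt^2}$ rather than the heavier $e^{-cmt}$ regime; this is why the lower bound on $m$ cannot be dropped. Summing the per-entry failure probabilities over the at most $p^2$ entries then drives the total probability of $E$ below $1/n^2$, once the constants $C_1,C_2$ are fixed so the exponent dominates $\ln(p^2 n^2)$. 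The main obstacle is the off-diagonal estimate: extracting the explicit constants requires a careful two-sided $\chi^2$ deviation bound applied to the difference of the two independent chi-squares and verifying that the factors $\tfrac{1\pm\rho}{2}$ do not inflate the exponent beyond what $C=\sqrt{2(C_1+C_2)}$ affords; the diagonal case and the reduction to $\Sigma_i$ are routine by comparison.
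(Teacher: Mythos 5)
Your proposal is correct and follows essentially the same route as the paper: reduce the deviation from $\Sigma_1$ to a deviation from $\Sigma_i$ plus $\Delta_{\max}$ via~\eqref{eq::global-2} and the triangle inequality, apply a per-entry sub-exponential concentration bound for $(\X^T\X/m)_{jk}-\Sigma_i(j,k)$, and union-bound over the $O(p^2)$ entries with $\tau=C\sqrt{\ln 2np/m}$, using $m\ge 2(C_1+C_2)\ln 2np$ exactly to force $\tau\le 1$ and stay in the Gaussian regime of the tail. The only differences are that the paper simply cites the concentration estimate (Lemma~\ref{lemma:adapt-RSV}, from~\cite{ZLW07}) rather than re-deriving it via your chi-square difference decomposition, and that $C_1\approx 2.5$ and $C_2\approx 7.7$ are the two constants in the single tail bound $2\exp\left(-m\tau^2/(C_1+C_2\tau)\right)$ governing every entry (Gaussian versus exponential regime), not separate diagonal and off-diagonal constants as you suggest.
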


As hinted after Definition~\ref{def:dist-privacy} 
regarding distributional privacy, we can think of the input data as 
coming from a distribution, such that $\Delta_{\max}(\Sset_n)$ 
in~\eqref{eq::global} can be derived with a typical large deviation 
bound between the sample and population covariances. 
For example, for multivariate Gaussian,
\begin{lemma}\textnormal{(\cite{RBLZ07})}
Suppose $(X_i)_j \sim N(0, \Sigma^*), \forall j =1, \ldots, n$ 
for all $X_i \in \Sset_n$, then
$\Delta_{\max}(\Sset_n) 
= O_P\left(\sqrt{{\log p}/{n}}\right).$
\end{lemma}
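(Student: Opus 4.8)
The plan is to reduce the claim to a per-entry concentration bound for a single sample covariance matrix, and then control the two maxima --- over matrix entries and over databases in $\Sset_n$ --- by a union bound. First I would fix a database $X_j \in \Sset_n$ and an index pair $(\ell, k)$ and write the deviation as a centered average of $n$ i.i.d. summands,
\be
\Sigma_j(\ell,k) - \Sigma^*(\ell,k) = \inv{n}\sum_{t=1}^n \left[(X_j)_{t\ell}(X_j)_{tk} - \Sigma^*(\ell,k)\right].
\ee

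The key analytic step is to show that each summand is sub-exponential. Since the pair $((X_j)_{t\ell}, (X_j)_{tk})$ is jointly Gaussian, I would apply the polarization identity $ab = \inv{4}[(a+b)^2 - (a-b)^2]$ to express the product as a difference of two squared centered Gaussians, each a scaled $\chi^2_1$ variable whose moment generating function is explicit and finite near the origin, with scale governed by the diagonal entries of $\Sigma^*$ (hence by $\twonorm{\Sigma^*}$). A Bernstein-type argument then gives, in the small-deviation regime,
\be
\prob{\abs{\Sigma_j(\ell,k) - \Sigma^*(\ell,k)} > t} \le C_1 \expf{-C_2 n t^2},
\ee
valid for $t$ below a constant depending only on $\twonorm{\Sigma^*}$. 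This is precisely the large-deviation estimate of~\cite{RBLZ07}, structurally the same bound used in Lemma~\ref{lemma:multi-measure-truncation} for the projection measure.

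With the per-entry bound established, I would union bound over the at most $p^2$ index pairs and over the databases in $\Sset_n$, so that
\be
\prob{\tfrac12\, \Delta_{\max}(\Sset_n) > t} \le C_1 \abs{\Sset_n}\, p^2 \expf{-C_2 n t^2}.
\ee
Taking $t = K \sqrt{\log p / n}$ bounds the right-hand side by $C_1 \abs{\Sset_n}\, p^{\,2 - C_2 K^2}$, which can be made smaller than any fixed $\eps$ by choosing $K$ large, provided $\log \abs{\Sset_n} = O(\log p)$. Since $\tfrac12\Delta_{\max}(\Sset_n)$ is exactly $\sup_{X_j}\max_{\ell,k}\abs{\Sigma_j(\ell,k)-\Sigma^*(\ell,k)}$, this yields $\Delta_{\max}(\Sset_n) = O_P(\sqrt{\log p/n})$.

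The hard part is twofold. First, one must check that the target scale $t \asymp \sqrt{\log p/n}$ lies in the sub-Gaussian regime of the Bernstein bound rather than in its heavier sub-exponential tail; this needs $\log p = o(n)$, the high-dimensional regime tacitly assumed throughout. Second, the supremum over $\Sset_n$ is only harmless when the family of databases is not too rich --- the union bound tolerates $\abs{\Sset_n}$ up to polynomial in $p$, but a genuinely exponential or infinite $\Sset_n$ would force either a covering/chaining argument or a more careful use of the independence structure.
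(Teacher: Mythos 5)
The paper gives no proof of this lemma at all; it is imported verbatim as a citation to~\cite{RBLZ07}, so there is no internal argument to compare yours against. Your reconstruction is the standard proof behind that citation (the Bickel--Levina/Rothman et al.\ argument): polarization to reduce the product of jointly Gaussian coordinates to a difference of scaled $\chi^2_1$ variables, a Bernstein bound in the sub-Gaussian regime $t \lesssim \sqrt{\log p/n}$ with $\log p = o(n)$, and a union bound over the $O(p^2)$ entries; this correctly yields $\max_{\ell,k}\abs{\Sigma_j(\ell,k)-\Sigma^*(\ell,k)} = O_P(\sqrt{\log p/n})$ for a single database, and you even track the factor of $2$ in the paper's definition of $\Delta_{\max}$ in~\eqref{eq::global}. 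The one substantive point you raise is real and worth keeping: the cited result in~\cite{RBLZ07} controls a \emph{single} sample covariance matrix, whereas the lemma as stated takes a supremum over all $X_i \in \Sset_n$, and the paper never constrains the cardinality of $\Sset_n$. Your condition $\log\abs{\Sset_n} = O(\log p)$ (or more generally $\log\abs{\Sset_n} = o(nt^2)$ at the target scale) is exactly what the union bound needs, and without some such restriction---or a chaining argument exploiting overlap between the databases---the uniform claim over an arbitrary $\Sset_n$ does not follow from the citation. So your proof is correct as a proof of the per-database statement and correctly conditional for the uniform one; the gap you flag is in the paper's statement, not in your argument.
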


We now state the main result of this section.
Proof of Theorem~\ref{thm:main} appears in Section~\ref{sec:append-thm-main}.
\begin{theorem}
\label{thm:main}
Under Assumption~$1$, let $m$ and $\twonorm{(X_i^T)_j}, \forall i, j$ 
satisfy conditions in Lemma~\ref{lemma:multi-measure-truncation}.
By truncating a subset of measure at most $1/n^2$ from each
$\mathbb{P}_{\Sigma_i}$, in the sense of Procedure~\ref{proc:multi-truncate}
and renormalizing the density functions according to 
Procedure~\ref{proc:multi-renorm}, we have
\begin{eqnarray}
\label{eq::main}
\lefteqn{\alpha(m, \delta) \leq 
\frac{m p\fnorm{\Delta}}{2 \lambda_{\min}(\Sigma_i) \lambda_{\min}(\Sigma_1)}
\cdot} \\  \nonumber
& &
\left(C \sqrt{\frac{\ln 2np}{m}} + \Delta_{\max}
+ \frac{2\fnorm{\Delta} \twonorm{\Sigma_1}^2}
{p \lambda_{\min}(\Sigma_i) \lambda_{\min}(\Sigma_1)}\right) + o(1) 
\end{eqnarray}
when comparing all $X_i \in \Sset_n$ with $X_1$, where 
$\fnorm{\Gamma}$ is bounded as as in~\eqref{eq::Gamma-fnorm} 
for $i = 2$.
\end{theorem}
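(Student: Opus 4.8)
The plan is to bound $\alpha(m,\delta)$ by controlling, \emph{pointwise} on the retained output set, the log-ratio of the two renormalized densities. Writing $\X=(x_1,\ldots,x_m)^T$ with rows i.i.d.\ $N(0,\Sigma_i)$, the product form in~\eqref{eq::product-function} gives
\[
\ln\frac{f_{\Sigma_1}(\X)}{f_{\Sigma_2}(\X)}
= \frac{m}{2}\ln\frac{|\Sigma_2|}{|\Sigma_1|}
+ \frac12\sum_{j=1}^m x_j^T\Gamma x_j,
\qquad \Gamma=\Sigma_2^{-1}-\Sigma_1^{-1},
\]
and, writing $S=\X^T\X/m$, the quadratic-form sum equals $\frac{m}{2}\mathrm{tr}(\Gamma S)$. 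By Remark~\ref{remark:ratio} the renormalization in Procedure~\ref{proc:multi-renorm} alters this ratio only by $O(1/n^2)$, which I absorb into the $o(1)$ term; so it suffices to bound the display above on the truncation event~\eqref{eq::multivariate}.

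Next I would decompose $S=\Sigma_1+R$. The entire point of Procedure~\ref{proc:multi-truncate} is that on the retained outputs $R$ is entrywise small, $\max_{j,k}\abs{R_{jk}}\le C\sqrt{\ln 2np/m}+\Delta_{\max}$. This splits $\frac{m}{2}\mathrm{tr}(\Gamma S)$ into a noise part $\frac{m}{2}\mathrm{tr}(\Gamma R)$ and a deterministic part $\frac{m}{2}\mathrm{tr}(\Gamma\Sigma_1)$. For the noise part I use $\abs{\mathrm{tr}(\Gamma R)}\le\bigl(\sum_{j,k}\abs{\Gamma_{jk}}\bigr)\max_{j,k}\abs{R_{jk}}\le p\fnorm{\Gamma}\max_{j,k}\abs{R_{jk}}$ (Cauchy--Schwarz over the $p^2$ entries), and then substitute $\fnorm{\Gamma}\le\fnorm{\Delta}/(\lambda_{\min}(\Sigma_1)\lambda_{\min}(\Sigma_i))$ from Proposition~\ref{prop:variation-measure}. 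This reproduces exactly the first term $\frac{mp\fnorm{\Delta}}{2\lambda_{\min}(\Sigma_i)\lambda_{\min}(\Sigma_1)}\bigl(C\sqrt{\ln 2np/m}+\Delta_{\max}\bigr)$.

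The deterministic part is $\frac{m}{2}\bigl[\ln\frac{|\Sigma_2|}{|\Sigma_1|}+\mathrm{tr}(\Gamma\Sigma_1)\bigr]$, and the key observation is that its first-order-in-$\Delta$ contributions cancel. Indeed $\mathrm{tr}(\Gamma\Sigma_1)=\mathrm{tr}(\Sigma_2^{-1}\Delta)$, so this expression is precisely $m\,D_{\mathrm{KL}}\!\bigl(N(0,\Sigma_1)\,\|\,N(0,\Sigma_2)\bigr)$, a genuinely second-order quantity. Expanding $\ln\frac{|\Sigma_2|}{|\Sigma_1|}=\mathrm{tr}\ln(I-\Sigma_1^{-1}\Delta)=-\mathrm{tr}(\Sigma_1^{-1}\Delta)+O(\fnorm{\Sigma_1^{-1}\Delta}^2)$ and $\mathrm{tr}(\Sigma_2^{-1}\Delta)=\mathrm{tr}(\Sigma_1^{-1}\Delta)+\mathrm{tr}(\Gamma\Delta)$, the linear terms cancel and the remainder is of order $\mathrm{tr}(\Gamma\Delta)=\mathrm{tr}(\Gamma\Sigma_2\Gamma\Sigma_1)$ (using $\Delta=\Sigma_2\Gamma\Sigma_1$). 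Bounding $\abs{\mathrm{tr}(\Gamma\Sigma_2\Gamma\Sigma_1)}\le\twonorm{\Sigma_1}\twonorm{\Sigma_2}\fnorm{\Gamma}^2$, using that $\twonorm{\Sigma_1},\twonorm{\Sigma_2}$ are of the same order, and again inserting the Proposition~\ref{prop:variation-measure} bound on $\fnorm{\Gamma}$, yields the second term $\frac{m\fnorm{\Delta}^2\twonorm{\Sigma_1}^2}{\lambda_{\min}(\Sigma_i)^2\lambda_{\min}(\Sigma_1)^2}$.

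I expect the deterministic part to be the main obstacle: making the first-order cancellation rigorous requires controlling the second-order remainder of $\ln\det$ uniformly, which in turn forces every matrix on the segment from $\Sigma_1$ to $\Sigma_2$ to remain positive definite and uniformly well-conditioned. This is exactly what Assumption~$1$ ($\twonorm{\Delta}=o(1)$, $\lambda_{\min}(\Sigma_1)\ge C_{\min}$) and Lemma~\ref{lemma:banach} ($\lambda_{\min}(\Sigma_j)\ge\lambda_{\min}(\Sigma_1)-\twonorm{\Delta}$) are designed to supply. A secondary subtlety is that the quadratic form must be controlled pointwise on the truncated set rather than merely in expectation — this is precisely what Procedure~\ref{proc:multi-truncate} buys, with Lemma~\ref{lemma:multi-measure-truncation} certifying that the discarded mass is at most $1/n^2$ per measure, so that the renormalization cost is genuinely $o(1)$.
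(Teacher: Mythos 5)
Your proposal is correct and follows essentially the same route as the paper: the same Gaussian density-ratio computation, the same split of $\tfrac{m}{2}\operatorname{tr}\bigl(\Gamma\,\X^T\X/m\bigr)$ into a part controlled entrywise by the truncation event~\eqref{eq::multivariate} (via $\norm{\mvec{\Gamma}}_1\le p\fnorm{\Gamma}$) and a second-order KL-type term, the same bound $\fnorm{\Gamma}\le\fnorm{\Delta}/(\lambda_{\min}(\Sigma_1)\lambda_{\min}(\Sigma_i))$, and the same $O(1/n^2)$ accounting for renormalization. The only cosmetic difference is that the paper packages the log-determinant remainder exactly via the integral identity of Proposition~\ref{prop:kron} and the Kronecker-spectrum bound of Lemma~\ref{lemma:kron-bounds}, whereas you carry out the first-order cancellation by hand and bound $\operatorname{tr}(\Gamma\Delta)$ directly, arriving at the same $\fnorm{\Gamma}^2\twonorm{\Sigma_1}^2$ contribution.
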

\begin{remark}
While the theorem only states results for comparing 
$\frac{f_{\Sigma_1}(\X)}{f_{\Sigma_i}(\X)}$, we note 
$\forall X_k, X_j \in \Sset_n$,
\begin{eqnarray*}
\abs{\ln \frac{f_{\Sigma_k}(\cdot)}{f_{\Sigma_j}(\cdot)}}
=
\abs{\ln \frac{f_{\Sigma_k}(\cdot)}{f_{\Sigma_1}(\cdot)} \cdot 
\frac{f_{\Sigma_1}(\cdot)}{f_{\Sigma_j}(\cdot)}}
 \leq  
\abs{\ln \frac{f_{\Sigma_1}(\cdot)}{f_{\Sigma_k}(\cdot)}} +
\abs{\ln \frac{f_{\Sigma_1}(\cdot)}{f_{\Sigma_j}(\cdot)}},
\end{eqnarray*}
which is simply a sum of terms as bounded as in~\eqref{eq::main}.
\end{remark}

\section{Proof of Theorem~\ref{thm:PCA}}
\label{sec:PCA}
Combining the following theorem, which illustrates the tradeoff between
the parameters $n, p$ and $m$ for PCA, with Theorem~\ref{thm:special-case}, 
we obtain Theorem~\ref{thm:PCA}.
\begin{theorem}
For a database $X \in \Sset_n$, let $A, A + B$ be the original and 
compressed sample covariance matrices respectively:
$A = \frac{X^T X}{n}$ and  $B = \frac{\X^T \X}{m} -  \frac{X^T X}{n}$,
where $\X$ is generated via Procedure~\ref{proc:multi-truncate}.
By requiring that $m = \Omega(p^2 \ln 2np)$, we can achieve meaningful 
bounds in the form of~\eqref{eq:ZB}.

\end{theorem}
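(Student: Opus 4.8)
The plan is to reduce the claim to a single bound on $\fnorm{B}$ and then invoke Proposition~\ref{pro:ZB05} directly. Since $A + B = \X^T \X/m$ is exactly the compressed sample covariance while $A = X^T X/n$ is the original one, once I control $\fnorm{B}$ I can read off the projector bound $\fnorm{P^D(A) - P^D(A+B)} \leq \fnorm{B}/\delta_D$, provided $\fnorm{B} \leq \delta_D/2$ so that the hypotheses of Proposition~\ref{pro:ZB05} are met. Thus the whole content of the theorem is the assertion that $m = \Omega(p^2 \ln 2np)$ forces $\fnorm{B}$ to be small (indeed $o(1)$), which I establish below.

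First I would exploit the fact that $\X$ is produced by Procedure~\ref{proc:multi-truncate}, so it survives the truncation test~\eqref{eq::multivariate}; that is, every entry satisfies $\abs{(\X^T \X/m)_{jk} - \Sigma_1(j,k)} \leq C\sqrt{\ln 2np/m} + \Delta_{\max}$. The subtlety is that this test is anchored at the reference point $\Sigma_1$ rather than at $A = X^T X/n$ for the particular database $X$ at hand. I would close this gap using~\eqref{eq::global-2}, which gives $\max_{j,k}\abs{A_{jk} - \Sigma_1(j,k)} \leq \Delta_{\max}$, and then apply the triangle inequality entrywise to obtain $\abs{B_{jk}} \leq C\sqrt{\ln 2np/m} + 2\Delta_{\max}$ for every $j,k$.

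Next I would convert this uniform entrywise bound into a Frobenius bound. Since $B$ is $p \times p$, we have $\fnorm{B} \leq p\,\max_{j,k}\abs{B_{jk}} \leq p\bigl(C\sqrt{\ln 2np/m} + 2\Delta_{\max}\bigr)$. Substituting $m = \Omega(p^2 \ln 2np)$ makes the first term $p\cdot C\sqrt{\ln 2np/m} = O(1)$, with the implied constant (or a slightly stronger $m = \omega(p^2 \ln 2np)$) driving it to $o(1)$; the second term is $o(1)$ as soon as $p\,\Delta_{\max} = o(1)$, which holds in the regime $\Delta_{\max} = O(\sqrt{\log p/n})$ for the range of $p$ permitted in Theorem~\ref{thm:PCA}. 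Combining these yields $\fnorm{B} = o(1)$, hence eventually $\fnorm{B} \leq \delta_D/2$, and feeding this into Proposition~\ref{pro:ZB05} completes the argument.

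The step I expect to be the main obstacle is the entrywise-to-Frobenius conversion: collapsing $p^2$ coordinates costs a factor of $p$, and it is precisely this loss that dictates the requirement $m = \Omega(p^2 \ln 2np)$ rather than the much milder $m = \Omega(\ln 2np)$ needed merely for entrywise concentration in Lemma~\ref{lemma:multi-measure-truncation}. A secondary point to handle carefully is the anchoring of the truncation at $\Sigma_1$ instead of $A$, which forces the extra $\Delta_{\max}$ into the per-entry bound and thereby couples the PCA accuracy to the same $\Delta_{\max}(\Sset_n)$ budget that governs privacy; reconciling the two $\Delta_{\max}$ requirements is exactly what produces the final tradeoff $p = o(n^{1/6}/\sqrt{\ln 2np})$ stated in Theorem~\ref{thm:PCA}.
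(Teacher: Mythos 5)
Your proposal is correct and follows essentially the same route as the paper's own proof: bound $\max_{j,k}\abs{B_{jk}}$ by $C\sqrt{\ln 2np/m} + 2\Delta_{\max}$ via the truncation condition~\eqref{eq::multivariate}, the reference-point bound~\eqref{eq::global-2}, and the triangle inequality, then pay the factor of $p$ to pass to $\fnorm{B}$ and invoke Proposition~\ref{pro:ZB05}. Your added care about the $\Omega$ versus $\omega$ distinction and the need for $p\,\Delta_{\max} = o(1)$ is a point the paper's proof leaves implicit.
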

\begin{proof}
We know that $A$ and $A+B$ are both positive definite, and $B$ is symmetric.
We first obtain a bound on 
$\fnorm{B} = \sqrt{\sum_{i=1}^p \sum_{j=1}^p B_{ij}^2} \leq p \tau ,$
where  
\begin{eqnarray*}
\tau & := & \max_{jk} B_{jk} =   
\max_{jk} \abs{({\X^T \X }/{m})_{jk} - A_{jk}} \\
& \leq & 
\max_{jk} 
\abs{({\X^T \X }/{m})_{jk} - \Sigma_1(j,k)}
+ \abs{\Sigma_1(j,k) - A_{jk}} \\
& \leq &
C \sqrt{{\ln 2np}/{m}} + 2 \Delta_{\max}(\Sset_n),
\end{eqnarray*}
by~\eqref{eq::multivariate},~\eqref{eq::global-2}, and the triangle inequality, for $\X = \Phi X$.
The theorem follows by Proposition~\ref{pro:ZB05} given that
$\fnorm{B} = o(1)$ for $m = \Omega(p^2 \ln 2np).$ 
\end{proof}
\noindent{\sc{ \bf {Acknowledgments.}}}
We thank Avrim Blum and John Lafferty for helpful discussions.
KL is supported in part by an NSF Graduate Research Fellowship.
LW and SZ's research is supported by NSF grant CCF-0625879,
a Google research grant and a grant from Carnegie Mellon's Cylab.
\vskip .2in

\appendix
\section{Proof of Theorem~\ref{thm:special-case}}
\label{sec:append-thm-special}
\begin{proofof}{\textnormal{Theorem~\ref{thm:special-case}}}
First we plug  $\fnorm{\Delta} = p \Delta_{\max}$ 
in~\eqref{eq::main}, and we require each term 
in~\eqref{eq::main} to be $o(1)$; hence 
we require that $p \Delta_{\max} = o(1)$ and
\begin{eqnarray}
\label{eq::alpha-formula}
p^2 \Delta_{\max} \sqrt{m \ln 2np} = o(1)  \text{ and } \hspace{1cm}
m p^2 \Delta^2_{\max} = o(1),
\end{eqnarray}
which are all satisfied given~\eqref{eq::sufficient-condition}.
Note that~\eqref{eq::alpha-formula} implies that 
$\twonorm{\Delta} = \fnorm{\Delta} = p \Delta_{\max} = o(1)$;
hence conditions in Assumption~$1$ are satisfied.
\end{proofof}

\section{Proof of Lemma~\ref{lemma:multi-measure-truncation}}
\label{sec:append-measure-truncations}
Let us first state the following lemma.
\begin{lemma}\textnormal{(See \cite{ZLW07} for example)}
\label{lemma:adapt-RSV}
Let $x, y \in \R^n$ with $\twonorm{x}, \twonorm{y} \leq 1$. Assume
that $\Phi$ is an $m \times n$ random matrix with independent $N(0,1/n)$
entries (independent of $x, y$). Then for all $\tau > 0$
\begin{equation}
\prob{\abs{\frac{n}{m}\ip{\Phi x}{\Phi y} - \ip{x}{y}} \geq \tau} \leq 
2 \exp \left(\frac{- m \tau^2}{C_1 + C_2 \tau}\right)
\end{equation}
with 
$C_1 = \frac{4 e}{\sqrt{6\pi}} \approx 2.5044$ and 
$C_2 = \sqrt{8e} \approx 7.6885$.
\end{lemma}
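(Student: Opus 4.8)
The plan is to recognize the left-hand quantity as an average of $m$ i.i.d.\ terms and then apply a Bernstein/sub-exponential tail bound. Writing $\phi_i^{T}$ for the $i$-th row of $\Phi$ and setting $u_i = \sqrt{n}\,\phi_i^{T}x$, $v_i = \sqrt{n}\,\phi_i^{T}y$, one has $\frac{n}{m}\ip{\Phi x}{\Phi y} = \frac{1}{m}\sum_{i=1}^{m} u_i v_i$, where the pairs $(u_i,v_i)$ are i.i.d.\ centered bivariate Gaussians with $\E[u_i^2]=\twonorm{x}^2$, $\E[v_i^2]=\twonorm{y}^2$, and $\E[u_i v_i]=\ip{x}{y}$. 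Thus the target value $\ip{x}{y}$ is exactly the common mean of the terms, and the claim becomes a concentration statement for the i.i.d.\ average around its expectation. First I would reduce the two-sided event to the two one-sided tails, so that it suffices to bound the upper tail and then double.

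Next I would diagonalise a single product. Since $(u_i,v_i)$ is bivariate Gaussian, $u_i v_i$ is a quadratic form whose matrix has eigenvalues $\mu_{1,2} = \tfrac12(\ip{x}{y}\pm \twonorm{x}\twonorm{y})$, so that $u_i v_i = \mu_1 g_{i1}^2 + \mu_2 g_{i2}^2$ for independent standard normals $g_{i1},g_{i2}$. The normalisation $\twonorm{x},\twonorm{y}\le 1$ gives $\abs{\mu_1},\abs{\mu_2}\le \twonorm{x}\twonorm{y}\le 1$, which is what will make the constants universal. Centering, $Z_i := u_i v_i - \ip{x}{y} = \mu_1(g_{i1}^2-1)+\mu_2(g_{i2}^2-1)$ is a difference of independent centred chi-square variables, hence sub-exponential.

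The core step is an exponential moment bound for $Z_i$. Using the exact chi-square transform $\E\,\expf{\theta(g^2-1)} = e^{-\theta}(1-2\theta)^{-1/2}$ termwise, I would obtain a closed form for $\log\E\,\expf{\theta Z_i}$ and then bound it, uniformly over $\abs{\mu_j}\le 1$, by a sub-gamma expression of the shape $\tfrac{v\theta^2/2}{1-c\theta}$ valid for $0\le\theta<1/c$; equivalently, one verifies a Bernstein moment condition $\E\abs{Z_i}^k \le \tfrac{k!}{2}\,v\,c^{\,k-2}$ by bounding the absolute moments of the products via Cauchy--Schwarz, $\E\abs{u_i v_i}^k \le (\E u_i^{2k})^{1/2}(\E v_i^{2k})^{1/2}$, together with the Gaussian moment identity $\E g^{2k}=(2k-1)!!$. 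It is precisely here that the numerical constants $C_1$ and $C_2$ (and their $e$ and $\pi$ factors) enter, through Stirling/Gamma-function estimates of these moments as in~\cite{ZLW07}; the extremal case $x=y$, where the variance of $Z_i$ is largest, is what pins down the leading constant.

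Finally I would apply the Chernoff bound to $\sum_i Z_i$, using $\log\E\,\expf{\theta\sum_i Z_i}=m\log\E\,\expf{\theta Z_1}$, optimise over $\theta$, and read off the sub-exponential tail $\expf{-m\tau^2/(C_1+C_2\tau)}$; repeating the argument for $-Z_i$ and adding the two tails produces the factor $2$. The main obstacle is the core step: controlling the moment generating function of the \emph{correlated} product $u_i v_i$ uniformly in the covariance data $(\twonorm{x},\twonorm{y},\ip{x}{y})$ and extracting explicit constants. The diagonalisation into independent $\chi^2_1$ pieces is what removes the correlation and makes the MGF computation tractable; the remaining effort is the careful Stirling bookkeeping needed to certify the particular values of $C_1$ and $C_2$.
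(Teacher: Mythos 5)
The paper itself contains no proof of Lemma~\ref{lemma:adapt-RSV}: it is imported wholesale from~\cite{ZLW07} (which in turn adapts a concentration lemma of Rauhut, Schnass and Vandergheynst), so the only meaningful comparison is with that source, and your plan is essentially its argument. The reduction $\frac{n}{m}\ip{\Phi x}{\Phi y}=\frac{1}{m}\sum_{i}u_iv_i$ with $(u_i,v_i)$ i.i.d.\ bivariate Gaussian, the diagonalization $u_iv_i=\mu_1g_{i1}^2+\mu_2g_{i2}^2$ with $\mu_{1,2}=\tfrac12(\ip{x}{y}\pm\twonorm{x}\twonorm{y})$, and the resulting representation of $Z_i=u_iv_i-\ip{x}{y}$ as a combination of centered $\chi^2_1$ variables are all correct. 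Carrying out your MGF step explicitly, $\log\E\,\expf{\theta Z_i}=\sum_j\bigl(-\theta\mu_j-\tfrac12\log(1-2\theta\mu_j)\bigr)\le\frac{\theta^2(\mu_1^2+\mu_2^2)}{1-2\theta\max_j\abs{\mu_j}}\le\frac{\theta^2}{1-2\theta}$, using $\mu_1^2+\mu_2^2=\tfrac12(\ip{x}{y}^2+\twonorm{x}^2\twonorm{y}^2)\le1$ and $\abs{\mu_j}\le1$; the standard sub-gamma Chernoff argument then gives the two-sided tail $2\exp\bigl(-\frac{m\tau^2}{4+4\tau}\bigr)$. So the structure of your proof is sound and delivers a bound of exactly the claimed shape.

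The one genuine obstruction concerns the specific constants, and it is worth naming because your own remark points straight at it: the extremal case $x=y$, $\twonorm{x}=1$ does pin down the leading constant, but it pins it to $C_1\ge 4$, not to $4e/\sqrt{6\pi}\approx 2.5$. Indeed there $Z_i=g_i^2-1$ has variance $2$, and the exact Cram\'er rate for $\frac1m\sum(g_i^2-1)\ge\tau$ is $\tfrac12(\tau-\log(1+\tau))\sim\tau^2/4$ as $\tau\to0$; a bound $2\exp(-m\tau^2/(C_1+C_2\tau))$ valid for all $\tau>0$ and all $m$ therefore forces $\tau^2/(C_1+C_2\tau)\le\tfrac12(\tau-\log(1+\tau))$, hence $C_1\ge4$. (Concretely, at $\tau=0.1$ the quoted bound demands a decay rate of $0.01/3.27\approx 0.00306$ per sample while the true rate is $\approx 0.00234$, so the stated inequality fails for $x=y$ and $m$ large.) No amount of Stirling bookkeeping will certify $C_1=4e/\sqrt{6\pi}$ along your route --- or any other. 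You should therefore expect to prove the lemma with, say, $C_1=C_2=4$ in place of the quoted values; this is harmless for everything downstream in the paper, which uses only the existence of absolute constants through $C=\sqrt{2(C_1+C_2)}$.
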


\begin{proofof}{Lemma~\ref{lemma:multi-measure-truncation}}
Let $X_{i, j} \in \R^n$ denote the $j^{\rm th}, \forall j = 1, \ldots, p$,
column in a $n \times p$ matrix $\forall X_i \in \Sset_n$,
W.l.o.g., we focus on $\mathbb{P}_{\Sigma_i}$ for $i = 1, 2$.
We first note that by the triangle inequality, for $X_1, X_2 \in \Sset_n$,
\begin{eqnarray*}
\lefteqn{\text{for }\; \; \X = \Phi X_1, 
\abs{\left(\frac{\X^T \X }{m}\right)_{jk} - \Sigma_1(j,k)}} \\
& = &
\abs{\inv{m} \ip{(\Phi X_1)_j}{(\Phi X_1)_k} - \inv{n} \ip{X_{1j}}{X_{1k}}} \\
\lefteqn{\text{for }\; \; \X = \Phi X_2, \; \;
\abs{\left(\frac{\X^T \X }{m}\right)_{jk} - \Sigma_1(j,k)}} \\
& \leq &
\abs{\inv{m} \ip{(\Phi X_2)_j}{(\Phi X_2)_k} - \inv{n} \ip{X_{2j}}{X_{2k}}}
 + \max_{j, k} \abs{\Delta_{jk}},
\end{eqnarray*}
where $\max_{j, k} \abs{\Delta_{jk}} \leq \Delta_{\max}(\Sset_n)$ 
by definition.

For each $\mathbb{P}_{\Sigma_i}$, we let $\event$ represents union of the 
following events, where $\tau =\frac{2(C_1 + C_2)\ln 2 n p}{m}$,
$\exists j, k\in [1, \ldots, p ],$ s.t. 
$$\abs{\inv{m}\ip{(\Phi X_i)_j}{(\Phi X_i)_k} - \inv{n}\ip{(X_i)_j}{(X_i)_k}} 
\geq \tau.$$
It is obvious that if $\event^c$ holds, we immediately have the inequality
holds in the lemma for all $\mathbb{P}_{\Sigma_i}$.  
Thus we only need to show that 
\be
\label{eq::supp-events}
\sup_{X_i \in \D}\probb{\Sigma_i}{\event} \leq 1/n^2,
\text{ where } \Sigma_i = \frac{X_i^T X_i}{n}.
\ee
We first bound the probability of a single event counted in $\event$,
which is invariant across all  $\mathbb{P}_{\Sigma_i}$ and we thus 
do not differentiate.
Consider two column vectors $x =  \frac{X_{i}}{\sqrt{n}} , 
y= \frac{X_{j}}{\sqrt{n}} \in \R^n$ in matrix $\frac{X}{\sqrt{n}}$, 
we have $\twonorm{x} = 1, \twonorm{y} = 1$. Hence for 
$\tau \leq 1$, by Lemma~\ref{lemma:adapt-RSV},
\begin{eqnarray*}
\label{eq:ip-bound}
\lefteqn{\prob{\abs{\inv{m}\ip{\Phi X_{i}}
{\Phi X_{j}} - \inv{n}\ip{X_{i}}{X_{j}}} \geq \tau}}
& & \\ 
& = & \prob{\abs{\frac{n}{m}\ip{\Phi x}{\Phi y} - \ip{x}{y}} \geq \tau}   \\
& \leq & 
2 \exp \left(\frac{- m \tau^2}{C_1 + C_2 \tau}\right) \leq 
2 \exp\left(-\frac{m \tau^2}{C_1 + C_2}\right).
\end{eqnarray*}
We can now bound the probability that any such large-deviation event happens.
Recall that $p$ is the total number of columns of $X$, hence 
the total number of events in $\event$ is 
$\frac{p(p+1)}{2}$.
Now by taking $\tau =  \sqrt{\frac{2(C_1 + C_2)\ln 2 n p}{m}}< 1$, 
where $m \geq 2(C_1 + C_2) \ln 2 n p$, we have for all $\Sigma_i$,
\begin{eqnarray*}
\probb{\Sigma_i}{\event}
& \leq & 
\frac{p(p+1)}{2} \probb{\Sigma_i}{\abs{\inv{m}\ip{\Phi X_{i}}{\Phi X_{j}} - \inv{n}\ip{X_{i}}{X_{j}}} \geq \tau} \\
& \leq &
p(p+1) \exp \left(-\frac{m \tau^2}{C_1 + C_2}\right) < \inv{n^2}
\end{eqnarray*}
This implies~\eqref{eq::supp-events} and 
hence the lemma holds.
\end{proofof}

\section{Proof of Theorem~\ref{thm:main}} 
\label{sec:append-thm-main}
W.l.o.g., we compare $\Sigma_2$ and $\Sigma_1$.
We first focus on bound $\ln \abs{\Sigma_2} - \ln \abs{\Sigma_1}$.
The following proposition comes from existing result.
\begin{proposition}\textnormal{(\cite{ZLW08})}
\label{prop:kron}
Suppose  $\twonorm{\Gamma} = o(1)$ and $\twonorm{\Delta} = o(1)$.
Then for $\Theta_i = \Sigma_i^{-1}$, 
$$\ln \abs{\Sigma_2} - \ln \abs{\Sigma_1}
= \ln \abs{\Theta_1} - \ln \abs{\Theta_2}
= A - {\rm tr}(\Gamma \Sigma_1),$$
where
$$A =
\mvec{\Gamma}^T \cdot \left(\int^1_0 (1-v)
(\Theta_1 + v  \Gamma)^{-1} \otimes (\Theta_1 + v  \Gamma)^{-1}dv
\right) \cdot \mvec{\Gamma}.$$
\end{proposition}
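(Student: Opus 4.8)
The first equality is immediate and needs no hypotheses: since $\Sigma_i$ and $\Theta_i = \Sigma_i^{-1}$ are mutual inverses, $\abs{\Sigma_i}\,\abs{\Theta_i} = 1$, so $\ln\abs{\Sigma_i} = -\ln\abs{\Theta_i}$ and hence $\ln\abs{\Sigma_2} - \ln\abs{\Sigma_1} = \ln\abs{\Theta_1} - \ln\abs{\Theta_2}$. The substance is the second equality, which I would obtain by an exact second-order Taylor expansion of the log-determinant along the segment joining $\Theta_1$ to $\Theta_2$. Writing $\Theta_2 = \Theta_1 + \Gamma$ (recall $\Gamma = \Theta_2 - \Theta_1$ from Proposition~\ref{prop:variation-measure} taken with $j = 2$), I would define the scalar function $\phi(v) = \ln\abs{\Theta_1 + v\Gamma}$ for $v \in [0,1]$, so that $\ln\abs{\Theta_2} - \ln\abs{\Theta_1} = \phi(1) - \phi(0)$, and the target identity is just the negative of this difference.

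Next I would compute the two derivatives of $\phi$ from the standard matrix-calculus rules for the log-determinant. Setting $M_v = (\Theta_1 + v\Gamma)^{-1}$, one has $\phi'(v) = {\rm tr}(M_v \Gamma)$ and, using $\frac{d}{dv} M_v = -M_v\Gamma M_v$, also $\phi''(v) = -{\rm tr}(M_v \Gamma M_v \Gamma)$. Taylor's theorem with integral remainder then gives $\phi(1) - \phi(0) = \phi'(0) + \int_0^1 (1-v)\phi''(v)\,dv$. Since $M_0 = \Theta_1^{-1} = \Sigma_1$, the leading term is $\phi'(0) = {\rm tr}(\Sigma_1\Gamma) = {\rm tr}(\Gamma\Sigma_1)$, which after negation (to pass from $\ln\abs{\Theta_2} - \ln\abs{\Theta_1}$ to $\ln\abs{\Theta_1} - \ln\abs{\Theta_2}$) produces exactly the $-{\rm tr}(\Gamma\Sigma_1)$ term in the claim.

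It remains to recognize the negated remainder as the quadratic form $A$. Because each $M_v$ is symmetric, I would rewrite the scalar ${\rm tr}(M_v\Gamma M_v\Gamma)$ as a bilinear form in $\mvec{\Gamma}$ via the two identities $\mvec{AXB} = (B^T \otimes A)\mvec{X}$ and ${\rm tr}(X^T Y) = \mvec{X}^T \mvec{Y}$, which together yield ${\rm tr}(M_v\Gamma M_v\Gamma) = \mvec{\Gamma}^T (M_v \otimes M_v)\mvec{\Gamma}$. Since $-\int_0^1(1-v)\phi''(v)\,dv = \int_0^1 (1-v)\,{\rm tr}(M_v\Gamma M_v\Gamma)\,dv$, pulling the constant vectors $\mvec{\Gamma}$ outside the $v$-integral reproduces precisely the expression for $A$, and combining with the previous paragraph gives $\ln\abs{\Theta_1} - \ln\abs{\Theta_2} = A - {\rm tr}(\Gamma\Sigma_1)$.

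The one point requiring care — the step I would treat as the main obstacle — is the well-definedness of $\phi$, $\phi''$, and the integrand throughout $[0,1]$, i.e.\ that $\Theta_1 + v\Gamma$ is nonsingular for every $v$. This is where the hypotheses enter: $\Theta_1 + v\Gamma = (1-v)\Theta_1 + v\Theta_2$ is a convex combination of the positive definite matrices $\Theta_1$ and $\Theta_2$, hence itself positive definite, so $M_v$ exists and $\phi \in C^2([0,1])$. The assumptions $\twonorm{\Gamma} = o(1)$ and $\twonorm{\Delta} = o(1)$, through Lemma~\ref{lemma:banach}, guarantee that $\Sigma_2$ (and therefore $\Theta_2$) stays bounded away from singularity, so the convexity argument applies and the remainder integral converges. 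Everything else reduces to routine bookkeeping with the matrix derivatives and the $\mathrm{vec}$/Kronecker calculus.
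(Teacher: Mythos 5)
Your proof is correct: the paper itself does not prove Proposition~\ref{prop:kron} but defers it to~\cite{ZLW08}, and the argument you give --- the exact first-order Taylor expansion of $\phi(v)=\ln\abs{\Theta_1+v\Gamma}$ with integral remainder, followed by the $\mvec$/Kronecker rewriting of ${\rm tr}(M_v\Gamma M_v\Gamma)$ as $\mvec{\Gamma}^T(M_v\otimes M_v)\mvec{\Gamma}$ --- is exactly the standard derivation used in that reference. Your observation that $\Theta_1+v\Gamma=(1-v)\Theta_1+v\Theta_2$ is positive definite as a convex combination correctly handles the only delicate point, so nothing is missing.
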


The lower bound on $A$ in Lemma~\ref{lemma:kron-bounds} comes from 
existing result~\cite{ZLW08}. 
We include the proof here for showing that 
the spectrum of integral term in $A$ is lower and upper bounded by that of 
$\Sigma_1$ squared, up to some small multiplicative 
constants.
\begin{lemma}\textnormal{(\cite{ZLW08})}
\label{lemma:kron-bounds}
Let $\Theta_1 = \Sigma_1^{-1}$ and 
$\Theta_2 = \Sigma_2^{-1}$, 
and hence $\Theta_2 = \Theta_1 + \Gamma$.
Under Assumption~$1$,
$$\twonorm{\Sigma_1} \twonorm{\Gamma}
\leq 
\frac{\twonorm{\Delta}\twonorm{\Sigma_1}}
{\lambda_{\min}(\Sigma_2) \lambda_{\min}(\Sigma_1)} = o(1), $$
Then
\begin{eqnarray*}
\frac{\fnorm{\Gamma}^2\lambda_{\min}(\Sigma_1)^2}
{2\left(1 + \lambda_{\min}(\Sigma_1) 
\twonorm{\Gamma}\right)^2} 
\; \leq \;  A \; \leq  \;  
\frac{\fnorm{\Gamma}^2\twonorm{\Sigma_1}^2}{2 - o(1)}.
\end{eqnarray*}
\end{lemma}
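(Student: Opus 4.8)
The plan is to bound $A$ by controlling the spectrum of its Kronecker integral kernel pointwise in $v$ and then integrating against the simple weight $(1-v)$, which contributes only the factor $\int_0^1 (1-v)\,dv = 1/2$ appearing in both sides of the claim. First I would dispatch the preliminary estimate. Using the factorization $\Gamma = \Sigma_2^{-1}\Delta\Sigma_1^{-1}$ supplied by Proposition~\ref{prop:variation-measure} together with submultiplicativity of the spectral norm gives $\twonorm{\Gamma} \leq \twonorm{\Delta}/(\lambda_{\min}(\Sigma_2)\lambda_{\min}(\Sigma_1))$, so that $\twonorm{\Sigma_1}\twonorm{\Gamma} \leq \twonorm{\Sigma_1}\twonorm{\Delta}/(\lambda_{\min}(\Sigma_2)\lambda_{\min}(\Sigma_1))$ exactly as stated; Assumption~$1$ forces $\twonorm{\Delta} = o(1)$, while Lemma~\ref{lemma:banach} keeps $\lambda_{\min}(\Sigma_2)$ bounded away from $0$ and $\twonorm{\Sigma_1}$ bounded above, so the whole quantity is $o(1)$.

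The heart of the argument is a spectral sandwich for the quadratic form defining $A$. Writing $B_v = (\Theta_1 + v\Gamma)^{-1}$, the Kronecker identity $\mvec{\Gamma}^T (B_v \otimes B_v)\mvec{\Gamma} = {\rm tr}(\Gamma B_v \Gamma B_v)$ recasts the integrand as a trace. Since the eigenvalues of $B_v \otimes B_v$ are precisely the pairwise products of the (strictly positive) eigenvalues of the positive definite matrix $B_v$, every such product lies in $[\lambda_{\min}(B_v)^2, \lambda_{\max}(B_v)^2]$, and using $\mvec{\Gamma}^T\mvec{\Gamma} = \fnorm{\Gamma}^2$ the form is pinched: $\lambda_{\min}(B_v)^2 \fnorm{\Gamma}^2 \leq {\rm tr}(\Gamma B_v \Gamma B_v) \leq \lambda_{\max}(B_v)^2 \fnorm{\Gamma}^2$.

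Next I would translate the spectral extremes of $B_v$ into quantities involving $\Sigma_1$, uniformly over $v \in [0,1]$. By Weyl's inequality on the interpolant $\Theta_1 + v\Gamma$, with $\lambda_{\max}(\Theta_1) = 1/\lambda_{\min}(\Sigma_1)$ and $\lambda_{\min}(\Theta_1) = 1/\twonorm{\Sigma_1}$, one gets $\lambda_{\max}(\Theta_1 + v\Gamma) \leq 1/\lambda_{\min}(\Sigma_1) + \twonorm{\Gamma}$ and $\lambda_{\min}(\Theta_1 + v\Gamma) \geq 1/\twonorm{\Sigma_1} - \twonorm{\Gamma}$. Inverting yields $\lambda_{\min}(B_v) \geq \lambda_{\min}(\Sigma_1)/(1 + \lambda_{\min}(\Sigma_1)\twonorm{\Gamma})$ and $\lambda_{\max}(B_v) \leq \twonorm{\Sigma_1}/(1 - \twonorm{\Sigma_1}\twonorm{\Gamma})$, where the preliminary $o(1)$ estimate is exactly what guarantees the last denominator stays positive and equals $1 - o(1)$.

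Finally, because these bounds are uniform in $v$, I would pull the extremal eigenvalue factors out of $\int_0^1 (1-v)\,dv = 1/2$ to obtain both inequalities simultaneously: the lower bound becomes $\fnorm{\Gamma}^2 \lambda_{\min}(\Sigma_1)^2 / \big(2(1 + \lambda_{\min}(\Sigma_1)\twonorm{\Gamma})^2\big)$ and the upper bound becomes $\fnorm{\Gamma}^2 \twonorm{\Sigma_1}^2 / (2 - o(1))$. The step I expect to be most delicate is not any single inequality but the bookkeeping that keeps the upper constant in the clean form $2 - o(1)$: this requires absorbing the squared factor $(1 - \twonorm{\Sigma_1}\twonorm{\Gamma})^2 = 1 - o(1)$ and checking that the preliminary bound is strong enough to prevent the denominator from degenerating, which is precisely the role played by Assumption~$1$.
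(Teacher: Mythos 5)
Your proposal is correct and follows essentially the same route as the paper's proof: both arguments reduce $A$ to the extreme eigenvalues of $(\Theta_1+v\Gamma)^{-1}\otimes(\Theta_1+v\Gamma)^{-1}$ via the product structure of Kronecker eigenvalues, bound $\lambda_{\max}(\Theta_1+v\Gamma)$ and $\lambda_{\min}(\Theta_1+v\Gamma)$ uniformly in $v$ by the triangle/Weyl inequality, and integrate $(1-v)$ to produce the factor $1/2$. The trace reformulation ${\rm tr}(\Gamma B_v\Gamma B_v)$ is a cosmetic difference only.
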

We are now ready to prove the theorem.
\begin{proofof}
{\textnormal{Theorem~\ref{thm:main}}}
Let us consider the product measure that is defined by $(\Phi_{ij})$.
The ratio of the two original density functions (prior to normalization)
for $\X = (x_1, \ldots, x_m)^T \in \R^{m\times p}$ is:
\begin{eqnarray*}
\lefteqn{\frac{f_{\Sigma_1}(\X)}{f_{\Sigma_2}(\X)} = 
\frac{\prod_{i=1}^m f_{\Sigma_1}(x_i)}
{\prod_{i=1}^m f_{\Sigma_2}(x_i)} = } \\
&  & 
\frac{|\Sigma_2|^{m/2}}{|\Sigma_1|^{m/2}} 
\expf{\sum_{i=1}^m -\inv{2} x_i^T
\left(\Sigma_1^{-1} -\Sigma_2^{-1}\right) x_i} \\
& = &
\expf{\frac{m}{2}\ln \abs{\Sigma_2} - \frac{m}{2}\ln \abs{\Sigma_1} 
+ \frac{m}{2} {\rm tr}\left(\Gamma \left(\sum_{i=1}^m \frac{x_i x_i^T }{m} 
\right)\right)}
\end{eqnarray*}
Hence by Proposition~\ref{prop:kron} and Lemma~\ref{lemma:kron-bounds}, 
we have that 
\begin{eqnarray*}
\alpha(m, \delta) & \leq & 
\abs{\frac{m}{2}\ln \abs{\Sigma_2} - \frac{m}{2}\ln \abs{\Sigma_1} 
+ \frac{m}{2} {\rm tr}\left(\Gamma \left(\frac{\X^T \X }{m} 
\right)\right) }\\
& = & 
\abs{\frac{mA}{2} - \frac{m}{2}{\rm tr}(\Gamma \Sigma_1) 
+ \frac{m}{2} {\rm tr}\left(\Gamma \left(\frac{\X^T \X }{m}\right)\right) }\\
& = & \abs{\frac{mA}{2}} + 
\abs{\frac{m}{2} 
{\rm tr}\left(\Gamma \left(\frac{\X^T \X }{m} - \Sigma_1\right)\right)}.
\end{eqnarray*}
Hence for $\X$ that satisfies~\eqref{eq::multivariate}, ignoring 
renormalization, we have for $X_1, X_2$,
\begin{eqnarray*}
\lefteqn{\alpha(m, \delta) \leq } \\
& &  
\frac{m}{2}\norm{\mvec\Gamma}_1 
\max_{j,k}\abs{\left(\frac{\X^T \X }{m}\right)_{jk} - \Sigma_1(jk)} +
\abs{\frac{mA}{2}} \\ \nonumber
& \leq & 
\frac{mp \fnorm{\Gamma}}{2}
\left(C \sqrt{\frac{\ln 2np}{m}} + \Delta_{\max}\right) + 
\frac{m \fnorm{\Gamma}^2 \twonorm{\Sigma_1}^2}{2(1 - o(1))}
\end{eqnarray*}
where
\begin{eqnarray*}
\fnorm{\Gamma} 
& \leq & 
\twonorm{\Sigma_2^{-1}}\fnorm{\Delta} \twonorm{\Sigma_1^{-1}} = 
\frac{\fnorm{\Delta}}{\lambda_{\min}(\Sigma_2) \lambda_{\min}(\Sigma_1)}.
\end{eqnarray*}
The theorem holds given~\eqref{eq::renorm} in 
Procedure~\ref{proc:multi-renorm}, Remark~\ref{remark:ratio}
and Lemma~\ref{lemma:multi-measure-truncation}.
\end{proofof}

We now show the proof for Lemma~\ref{lemma:kron-bounds}. 
Proof of Proposition~\ref{prop:kron} appears in \cite{ZLW08}.
\begin{proofof}{Lemma~\ref{lemma:kron-bounds}}
After factoring out $\twonorm{\mvec{\Gamma}}^2 = \fnorm{\Gamma}^2$, 
$A$ becomes
\begin{eqnarray}
\nonumber
\lefteqn{\lambda_{\min}\left(\int^1_0(1-v)(\Theta_1 + v \Gamma)^{-1}
\otimes (\Theta_1 + v \Gamma )^{-1} dv\right) }\\ \nonumber
\label{eq::kron-eigen}
& \geq & 
\int_0^1 (1-v)\lambda_{\min}^2(\Theta_1 + v \Gamma)^{-1} dv  \\ \nonumber
& \geq & 
\inf_{v \in [0, 1]} \lambda_{\min}^2 (\Theta_1 + v \Gamma )^{-1} 
\int_0^1 (1-v) dv \\ \nonumber
& \geq & 
\inv{2} \inf_{v \in [0, 1]}
\lambda_{\min}^2 (\Theta_1 + v \Gamma)^{-1} = 
\inv{2} \inf_{v \in [0, 1]}
\inv{\twonorm{\Theta_1 + v \Gamma}^2} \\ \nonumber
& \geq &
\inf_{v \in [0, 1]} 
\inv{2 \left(\twonorm{\Theta_1} + v \twonorm{\Gamma}\right)^2} 
\geq \inv{2\left(\twonorm{\Theta_1} + \twonorm{\Gamma}\right)^2}\\ \nonumber
& = &
\frac{\lambda_{\min}(\Sigma_1)^2}{2\left(1 + \lambda_{\min}(\Sigma_1) 
\twonorm{\Gamma}\right)^2}
\end{eqnarray}
where \eqref{eq::kron-eigen} is due to the fact that the set of $p^2$ 
eigenvalues of $B(v) \otimes B(v)$, where 
$B(v) = (\Theta_1 + v \Gamma)^{-1}$, $\forall v \in [0, 1]$,  
is $\{\lambda_i(B(v)) \lambda_j(B(v)), \forall i, j =1, \ldots, p\}$,
which are all positive given that
$(\Theta_1 + v \Gamma) \succ 0$, hence $(\Theta_1 + v \Gamma)^{-1} \succ 0$,
$\forall v \in[0, 1]$ as shown above.
Similarly, 
\begin{eqnarray*}
\nonumber
\lefteqn{\lambda_{\max}\left(\int^1_0(1-v)(\Theta_1 + v \Gamma)^{-1}
\otimes (\Theta_1 + v \Gamma )^{-1} dv\right) } \\ \nonumber
& \leq & 
\int_0^1 (1-v)\lambda_{\max}^2(\Theta_1 + v \Gamma)^{-1} dv  \\
& \leq &  
\sup_{v \in [0, 1]} \lambda_{\max}^2 (\Theta_1 + v \Gamma )^{-1} 
\int_0^1 (1-v) dv \\ \nonumber
& \leq & 
\sup_{v \in [0, 1]} \inv{2 \lambda_{\min}^2 (\Theta_1 + v \Gamma)},
\end{eqnarray*}
where $\forall v \in [0, 1]$,
\begin{eqnarray*}
\lambda_{\min} (\Theta_1 + v \Gamma) \geq 
\lambda_{\min}(\Theta_1) - v \twonorm{\Gamma} = 
\frac{1 - v \twonorm{\Gamma} \twonorm{\Sigma_1}}
{\twonorm{\Sigma_1}} > 0
\end{eqnarray*}
where so long as $\twonorm{\Delta} = o(1)$ and 
$\twonorm{\Sigma_1}$ and $\lambda_{\min}(\Sigma_1)$ are
within constant order of each other, we have
$$
\twonorm{\Sigma_1} \twonorm{\Gamma}
\leq 
\frac{\twonorm{\Delta}\twonorm{\Sigma_1}}
{\lambda_{\min}(\Sigma_2) \lambda_{\min}(\Sigma_1)} = o(1).$$ 
Hence $\forall v \in [0, 1]$,
\begin{eqnarray*}
\inv{\lambda_{\min} (\Theta_1 + v \Gamma)}
& \leq &
 \frac{\twonorm{\Sigma_1}}{1- v \twonorm{\Sigma_1} 
\twonorm{\Gamma}} 
\leq
\frac{\twonorm{\Sigma_1}}{1- \twonorm{\Sigma_1} 
\twonorm{\Gamma}}
\end{eqnarray*}
and correspondingly
\begin{eqnarray*}
\sup_{v \in [0, 1]} \inv{2 \lambda_{\min}^2 (\Theta_1 + v \Gamma)}
& \leq & 
\frac{\twonorm{\Sigma_1}^2}{2\left(1- \twonorm{\Sigma_1} 
\twonorm{\Gamma}\right)^2}.
\end{eqnarray*}
\end{proofof}

\section{Example with binary data matrix}
\label{sec:binary}
We now show how we can achieve differential privacy in the 
setting where $X_1, X_2 \in \R^{np}$ such that they differ in a single 
row. We also define the some special case of this general setting.
We further illustrate the idea that one can not allow differential
privacy without giving certain constraints on $X_1, X_2, \ldots$.
As a corollary of Theorem~\ref{thm:main}, 
we consider the following example.
\begin{proposition}
\label{prop:binary-Delta}
For the Binary Game in Example~\ref{example:binary}, we have for 
all $\tau \leq 1$ and $x \in \R^p$,
$$\twonorm{\Delta} \leq \fnorm{\Delta} \leq \frac{2p \sqrt{\tau(1-\tau)}}{n}
\leq \frac{p}{n}.
$$
\end{proposition}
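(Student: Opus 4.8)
The plan is to use that, in Example~\ref{example:binary}, $X_1$ and $X_2$ agree in every row except a single one, so that $\Delta = \Sigma_1 - \Sigma_2$ is a rank-at-most-two perturbation whose spectrum I can compute in closed form. Writing $r,s \in \{-1,1\}^p$ for the single differing row of $X_1$ and of $X_2$ (viewed as column vectors in $\R^p$), every other rank-one summand $r_k r_k^T$ of $X_1^T X_1$ and $X_2^T X_2$ cancels, leaving
\[
\Delta = \Sigma_1 - \Sigma_2 = \frac{1}{n}\left(r r^T - s s^T\right).
\]
First I would record the two elementary facts that drive the whole estimate: since all entries are $\pm 1$, we have $\twonorm{r}^2 = \twonorm{s}^2 = p$; and since $s$ arises from $r$ by flipping signs on exactly $\tau p$ coordinates, the agreeing coordinates contribute $+1$ and the flipped ones $-1$ to the inner product, so that $r^T s = (p - \tau p) - \tau p = (1-2\tau)p$.

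Next I would compute the Frobenius norm through the trace identity. Expanding the square and using $r^T r = s^T s = p$,
\[
\fnorm{r r^T - s s^T}^2 = \twonorm{r}^4 + \twonorm{s}^4 - 2(r^T s)^2 = 2p^2 - 2(1-2\tau)^2 p^2 = 2p^2\bigl(1-(1-2\tau)^2\bigr),
\]
and since $1-(1-2\tau)^2 = 4\tau(1-\tau)$ this equals $8p^2\tau(1-\tau)$. A cleaner route, which also delivers the spectral norm, is to note that $r r^T - s s^T$ is symmetric of rank at most two with ${\rm tr}(r r^T - s s^T) = \twonorm{r}^2 - \twonorm{s}^2 = 0$; its only nonzero eigenvalues are therefore $\pm\mu$, and matching $2\mu^2 = 8p^2\tau(1-\tau)$ gives $\mu = 2p\sqrt{\tau(1-\tau)}$, so that $\twonorm{\Delta} = \mu/n = 2p\sqrt{\tau(1-\tau)}/n$. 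Dividing the norm values by $n$ and using $\sqrt{\tau(1-\tau)} \le 1/2$ for $\tau \in [0,1]$ then produces the final crude bound $p/n$.

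The hard part will be getting the constants in the displayed chain to line up across the two different norms. Because the perturbation is rank two with eigenvalues of equal magnitude $\pm\mu$, one has the exact relation $\fnorm{\Delta} = \sqrt{2}\,\twonorm{\Delta}$, so the Frobenius and spectral norms genuinely differ by a factor of $\sqrt 2$; the closed form $2p\sqrt{\tau(1-\tau)}/n$ appearing in the statement is exactly $\twonorm{\Delta}$. I would therefore track carefully which norm each inequality in the chain refers to, and make sure any upper bound placed on $\fnorm{\Delta}$ absorbs this $\sqrt 2$ (or, if the intended target is really $\twonorm{\Delta}$, state it as such). Everything else is routine: the trace expansion of a rank-two matrix and the one-variable maximization of $\tau(1-\tau)$.
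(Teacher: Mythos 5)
Your proof is correct and takes a genuinely different route from the paper's. The paper writes $B = xx^T - yy^T = C + D$ with $C = (x-y)x^T$ and $D = y(x-y)^T$ and gets the norm by ``a careful counting of non-zero elements in $C+D$'' (the entries of $B$ are $\pm 2$ exactly at the $2\tau(1-\tau)p^2$ positions $(j,k)$ where precisely one of the two coordinates has been flipped, and $0$ elsewhere). You instead use the trace identity $\fnorm{rr^T - ss^T}^2 = \twonorm{r}^4 + \twonorm{s}^4 - 2(r^Ts)^2$ together with the rank-two, trace-zero structure of the perturbation; this is cleaner, yields both norms in closed form, and gives the spectral norm for free. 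Importantly, your bookkeeping is right where the paper's constant is not: both your trace computation and the paper's own entry count give $\fnorm{\Delta} = 2\sqrt{2}\,p\sqrt{\tau(1-\tau)}/n$, whereas $2p\sqrt{\tau(1-\tau)}/n$ is exactly $\twonorm{\Delta}$. So the displayed chain in the proposition holds as stated for the spectral norm but is literally false for the Frobenius norm (at $\tau = 1/2$ one has $\fnorm{\Delta} = \sqrt{2}\,p/n > p/n$); it becomes correct for $\fnorm{\Delta}$ after multiplying the last two members by $\sqrt{2}$. This is a constant-factor slip with no downstream consequence --- only $\fnorm{\Delta} = O(p/n)$ is used later --- but you were right to track which norm each inequality refers to and to flag the discrepancy.
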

\begin{proof}
For the special case that $X_1$ and $X_2$ differ only in a single 
row after normalization, 
such that $x \in X_1$ and $y \in X_2$, we have
$\Delta  =  \Sigma_1 - \Sigma_2 =  \frac{x x^T - y y^T}{n}$.
First note $\twonorm{\Delta} \leq \fnorm{\Delta}$.
In order to bound $\fnorm{\Delta}$ let us define
\begin{eqnarray}
B  = x x^T - y y^T, \hspace{0.5cm} C = (x- y)x^T, \hspace{0.5cm} D =  y (x - y)^T,
\end{eqnarray}
where $B = C + D$ are all $p \times p$ matrices. A careful counting of non-zero
elements in $C + D$ gives 
$\twonorm{\Delta} 
\leq \fnorm{\Delta} = 2 \frac{p}{n} \sqrt{\tau(1-\tau)}
\leq \frac{p}{n}$ for $\tau \leq 1/2$. Note that when $\tau > 1/2$, the 
effect on $\fnorm{\Delta}$ is the same as flipping $1 - \tau$ bits, hence
it is maximized when $\tau = 1/2$.
\end{proof}

\begin{theorem}
In the binary game in~\ref{example:binary}, 
by truncating a subset of measure at most $1/n^2$, we have
\begin{eqnarray*} 
\alpha(m, \tau)  \leq 
\abs{\frac{m p^2 (C \sqrt{{\ln 2np}/{m}} + O({1}/{n}))} 
{n \lambda_{\min}(\Sigma_2) \lambda_{\min}(\Sigma_1)}} = o(1)
\end{eqnarray*}
for $p = o(\sqrt{n/m})$ and $m \geq \Omega(\ln 2np)$.
\end{theorem}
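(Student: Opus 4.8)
The plan is to treat this as a direct corollary of Theorem~\ref{thm:main}, specialized to the binary game of Example~\ref{example:binary} with reference index $i=2$, so that $\Delta = \Sigma_1 - \Sigma_2$ is the single-row sign-flip perturbation. The whole argument amounts to substituting the binary-specific bounds on $\fnorm{\Delta}$ and $\Delta_{\max}$ into~\eqref{eq::main}, and then doing the asymptotic bookkeeping under the stated regime $p = o(\sqrt{n/m})$ and $m = \Omega(\ln 2np)$.

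First I would verify that the hypotheses of Theorem~\ref{thm:main} hold here. Proposition~\ref{prop:binary-Delta} gives $\twonorm{\Delta} \le \fnorm{\Delta} \le p/n$. Since $p = o(\sqrt{n/m})$ forces $p/n = o(1)$, we get $\twonorm{\Delta} = o(1)$; together with Assumption~$1$ (which via Lemma~\ref{lemma:banach} keeps $\lambda_{\min}(\Sigma_2)$ bounded away from zero, and through the same-order eigenvalue assumption makes $\twonorm{\Sigma_1}, \lambda_{\min}(\Sigma_1), \lambda_{\min}(\Sigma_2)$ all $\Theta(1)$) this yields $\twonorm{\Gamma} = o(1)$, so Assumption~$1$ is consistent. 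I would also record the entrywise bound $\Delta_{\max} = O(1/n)$: each entry of $\Sigma_1 - \Sigma_2 = (xx^T - yy^T)/n$ is a difference of two $\pm 1$ products and hence lies in $\{-2/n,0,2/n\}$. Finally, the measure-truncation hypothesis of Lemma~\ref{lemma:multi-measure-truncation}, namely $m \ge 2(C_1+C_2)\ln 2np$, is exactly furnished by $m = \Omega(\ln 2np)$, which justifies the ``truncating a subset of measure at most $1/n^2$'' clause (the renormalization of Procedure~\ref{proc:multi-renorm} then enters only through the $+o(1)$ term of~\eqref{eq::main}, as in Remark~\ref{remark:ratio}).

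Next I would substitute these into the right-hand side of~\eqref{eq::main}. The prefactor $\frac{mp\fnorm{\Delta}}{2\lambda_{\min}(\Sigma_2)\lambda_{\min}(\Sigma_1)}$ becomes at most $\frac{mp^2}{2n\lambda_{\min}(\Sigma_2)\lambda_{\min}(\Sigma_1)}$ after inserting $\fnorm{\Delta}\le p/n$. Inside the parenthetical factor, the second term is $\Delta_{\max} = O(1/n)$, and the third term $\frac{2\fnorm{\Delta}\twonorm{\Sigma_1}^2}{p\lambda_{\min}(\Sigma_2)\lambda_{\min}(\Sigma_1)} \le \frac{2\twonorm{\Sigma_1}^2}{n\lambda_{\min}(\Sigma_2)\lambda_{\min}(\Sigma_1)} = O(1/n)$, again using $\fnorm{\Delta}\le p/n$ and the $\Theta(1)$ eigenvalue bounds. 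The bracket thus collapses to $C\sqrt{\ln 2np/m} + O(1/n)$, reproducing exactly the displayed bound on $\alpha(m,\tau)$.

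The last step is the $o(1)$ conclusion, the only place where the two growth conditions genuinely bite. Writing the dominant contribution as $\frac{mp^2}{n}\cdot C\sqrt{\ln 2np/m} = C\,\frac{p^2 m}{n}\cdot\sqrt{\frac{\ln 2np}{m}}$, I would use $p = o(\sqrt{n/m})$ to get $\frac{p^2 m}{n} = o(1)$ and $m = \Omega(\ln 2np)$ to get $\sqrt{\ln 2np/m} = O(1)$, so the product is $o(1)$; the remaining $O(1/n)$ term contributes $\frac{mp^2}{n}\cdot O(1/n) = o(1/n)$, also $o(1)$. Since the eigenvalue factors in the denominator are $\Theta(1)$, the whole expression is $o(1)$. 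The computation is essentially routine; the one point demanding care is confirming that the binary perturbation yields the Frobenius bound $p/n$ and the entrywise bound $O(1/n)$ \emph{simultaneously}, so that $\Delta_{\max}$ (not $\fnorm{\Delta}$) controls the additive terms in the bracket, and checking that the same-order eigenvalue assumption is legitimately inherited from Assumption~$1$ via Lemma~\ref{lemma:banach}.
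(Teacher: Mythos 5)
Your proposal is correct and follows essentially the same route as the paper: specialize Theorem~\ref{thm:main} to the binary game, insert $\fnorm{\Delta}\le p/n$ from Proposition~\ref{prop:binary-Delta} and $\Delta_{\max}\le 2/n$ into~\eqref{eq::main}, and then check that $p=o(\sqrt{n/m})$ and $m=\Omega(\ln 2np)$ make the resulting expression $o(1)$. The only difference is that you explicitly verify the hypotheses of Theorem~\ref{thm:main} (Assumption~$1$ and the truncation condition of Lemma~\ref{lemma:multi-measure-truncation}), which the paper leaves implicit.
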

\begin{proof}
Note that for a binary game, 
$\Delta_{\max} = \max_{jk} \Delta_{jk} \leq \frac{2}{n}.$
As shown in Proposition~\ref{prop:binary-Delta},
$$\fnorm{\Delta} \leq \frac{p}{n}.$$ 
Plugging the above inequalities in~\eqref{eq::main}, we have
\begin{eqnarray*}
\lefteqn{ \alpha(m, \tau) \leq
\frac{m p^2}{n \lambda_{\min}(\Sigma_2)\lambda_{\min}(\Sigma_1)} \cdot} \\
& & 
\left(C\sqrt{\frac{\ln 2np}{m}} + \frac{2}{n} + 
\frac{2 \twonorm{\Sigma_1}^2}{n
{\lambda_{\min}(\Sigma_2) \lambda_{\min}(\Sigma_1)}}
\right) + o(1)
\end{eqnarray*}
where for $p = o(\sqrt{n/m})$ and for 
$m \geq \Omega(\ln 2np)$, we have 
$\alpha(m, \tau) = o(1)$.
\end{proof}

\end{document}